\documentclass{article} 
\usepackage{iclr2021_conference_arxiv,times}

\usepackage{amsmath,amssymb}
\usepackage{enumitem,epsf}
\usepackage{tikz,float}
\usepackage{pgfplots}
\usepackage{wrapfig}
\usepackage{mathtools}
\usepackage[pdftex,colorlinks]{hyperref}
\usepackage{xcolor, colortbl}
\newcommand{\norm}[1]{\left\lVert #1 \right\rVert}

\usepackage{macros}

\usepackage{hyperref}
\usepackage{url}

\title{Computational Separation Between Convolutional and Fully-Connected Networks}


\author{Eran Malach \\
School of Computer Science\\
Hebrew University\\
Jerusalem, Israel \\
\texttt{eran.malach@mail.huji.ac.il} \\
\And
Shai Shalev-Shwartz \\
School of Computer Science\\
Hebrew University\\
Jerusalem, Israel \\
\texttt{shais@cs.huji.ac.il}
}

%

\iclrfinalcopy 
\begin{document}

\maketitle

\begin{abstract}
Convolutional neural networks (CNN) exhibit unmatched performance in a multitude of computer vision tasks. However, the advantage of using convolutional networks over fully-connected networks is not understood from a theoretical perspective. In this work, we show how convolutional networks can leverage locality in the data, and thus achieve a computational advantage over fully-connected networks. Specifically, we show a class of problems that can be efficiently solved using convolutional networks trained with gradient-descent, but at the same time is hard to learn using a polynomial-size fully-connected network.
\end{abstract}

\section{Introduction}

Convolutional neural networks \citep{lecun1998gradient, krizhevsky2012imagenet} achieve state-of-the-art performance on every possible task in computer vision. However, while the empirical success of convolutional networks is indisputable, the advantage of using them is not well understood from a theoretical perspective. Specifically, we consider the following fundamental question:

\begin{center} \textit{Why do convolutional networks (CNNs) perform better than fully-connected networks (FCNs)?} \end{center}

Clearly, when considering expressive power, FCNs have a big advantage. Since convolution is a linear operation, any CNN can be expressed using a FCN, whereas FCNs can express a strictly larger family of functions. So, any advantage of CNNs due to expressivity can be leveraged by FCNs as well. Therefore, \textit{expressive power does not explain the superiority of CNNs over FCNs}.

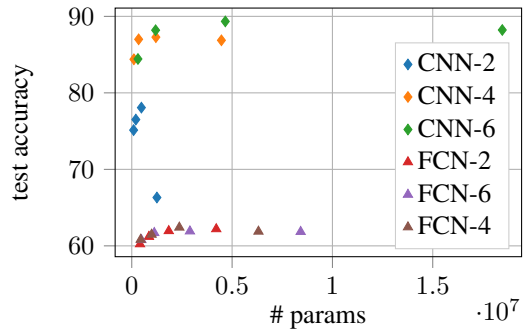
\begin{wrapfigure}{r}{0.5\textwidth}
      \centering
\begin{tikzpicture}

\definecolor{color1}{rgb}{1,0.498039215686275,0.0549019607843137}
\definecolor{color0}{rgb}{0.12156862745098,0.466666666666667,0.705882352941177}
\definecolor{color3}{rgb}{0.83921568627451,0.152941176470588,0.156862745098039}
\definecolor{color2}{rgb}{0.172549019607843,0.627450980392157,0.172549019607843}
\definecolor{color5}{rgb}{0.549019607843137,0.337254901960784,0.294117647058824}
\definecolor{color4}{rgb}{0.580392156862745,0.403921568627451,0.741176470588235}

\begin{axis}[
legend cell align={left},
legend entries={{CNN-2},{CNN-4},{CNN-6},{FCN-2},{FCN-6},{FCN-4}},
legend style={at={(0.97,0.03)}, anchor=south east, draw=white!80.0!black},
tick align=outside,
tick pos=left,
x grid style={lightgray!92.02614379084967!black},
xlabel={\# params},
xmajorgrids,
xmin=-828554.551944557, xmax=19428089.4026541,
y grid style={lightgray!92.02614379084967!black},
ylabel={test accuracy},
ymajorgrids,
ymin=58.584460199526, ymax=91.083930633095,
width=1\linewidth,
height=0.7\linewidth
]
\addplot [only marks, draw=color0, fill=color0, colormap/viridis, mark=diamond*]
table [row sep=\\]{%
x                      y\\ 
+9.220200000000000e+04 +7.513000000000000e+01\\ 
+2.028260000000000e+05 +7.651000000000001e+01\\ 
+4.793700000000000e+05 +7.806999999999999e+01\\ 
+1.253642000000000e+06 +6.631999999999999e+01\\ 
};
\addplot [only marks, draw=color1, fill=color1, colormap/viridis, mark=diamond*]
table [row sep=\\]{%
x                      y\\ 
+1.069220000000000e+05 +8.439000000000000e+01\\ 
+3.428580000000000e+05 +8.700000000000000e+01\\ 
+1.201802000000000e+06 +8.728000000000000e+01\\ 
+4.467978000000000e+06 +8.687000000000000e+01\\ 
};
\addplot [only marks, draw=color2, fill=color2, colormap/viridis, mark=diamond*]
table [row sep=\\]{%
x                      y\\ 
+3.083940000000000e+05 +8.443000000000001e+01\\ 
+1.188170000000000e+06 +8.819000000000000e+01\\ 
+4.661898000000000e+06 +8.934000000000000e+01\\ 
+1.846605800000000e+07 +8.822000000000000e+01\\ 
};
\addplot [only marks, draw=color3, fill=color3, colormap/viridis,mark=triangle*]
table [row sep=\\]{%
x                      y\\ 
+4.116580000000000e+05 +6.023000000000000e+01\\ 
+8.560740000000000e+05 +6.120000000000000e+01\\ 
+1.843210000000000e+06 +6.195000000000000e+01\\ 
+4.210698000000000e+06 +6.220000000000000e+01\\ 
};
\addplot [only marks, draw=color4, fill=color4, colormap/viridis,mark=triangle*]
table [row sep=\\]{%
x                      y\\ 
+4.787300000000000e+05 +6.078000000000000e+01\\ 
+1.121290000000000e+06 +6.170000000000000e+01\\ 
+2.897930000000000e+06 +6.191000000000000e+01\\ 
+8.417290000000000e+06 +6.184000000000000e+01\\ 
};
\addplot [only marks, draw=color5, fill=color5, colormap/viridis,mark=triangle*]
table [row sep=\\]{%
x                      y\\ 
+4.451940000000000e+05 +6.082000000000000e+01\\ 
+9.886820000000000e+05 +6.148000000000000e+01\\ 
+2.370570000000000e+06 +6.241000000000000e+01\\ 
+6.313994000000000e+06 +6.188000000000000e+01\\ 
};
\end{axis}

\end{tikzpicture}
      	\vspace{-0.2in}
	\caption{Comparison between CNN and FCN of various depths (2/4/6) and widths, trained for 125 epochs with RMSprop optimizer.}\label{fig:cifar}
	\vspace{-0.15in}
\end{wrapfigure}

There are several possible explanations to the superiority of CNNs over FCNs: parameter efficiency (and hence lower sample complexity), weight sharing, and locality prior. The main result of this paper is arguing that locality is a key factor by proving a computational separation between CNNs and FCNs based on locality. But, before that, let's discuss the other possible explanations.

First, we observe that CNNs seem to be much more efficient in utilizing their parameters. A FCN needs to use a greater number of parameters compared to an equivalent CNN: each neuron of a CNN is limited to a small receptive field, and moreover, many of the parameters of the CNN are shared. From classical results in learning theory, using a large number of parameters may result in inferior generalization. So, can the advantage of CNNs be explained simply by counting parameters?

To answer this question, we observe the performance of CNN and FCN based architecture of various widths and depths trained on the CIFAR-10 dataset. For each architecture, we observe the final test accuracy versus the number of trainable parameters. The results are shown in Figure \ref{fig:cifar}. As can be seen, CNNs have a clear advantage over FCNs, regardless of the number of parameters used. As is often observed, a large number of parameters does not hurt the performance of neural networks, and so \textit{parameter efficiency cannot explain the advantage of CNNs}. This is in line with various theoretical works on optimization of neural networks, which show that over-parameterization is beneficial for convergence of gradient-descent (e.g., \cite{du2018gradient, soltanolkotabi2018theoretical, li2018learning}).

The superiority of CNNs can be also attributed to the extensive \textit{weight sharing} between the different convolutional filters. Indeed, it has been previously shown that weight sharing is important for the optimization of neural networks \citep{shalev2017weight}. 
Moreover, the translation-invariant nature of CNNs, which relies on weight sharing, is often observed to be beneficial in various signal processing tasks \citep{kauderer2017quantifying,kayhan2020translation}.
So, how much does the weight sharing contribute to the superiority of CNNs over FCNs?

To understand the effect of weight sharing on the behavior of CNNs, it is useful to study \textbf{locally-connected network} (LCN) architectures, which are similar to CNNs, but have no weight sharing between the kernels of the network. While CNNs are far more popular in practice (also due to the fact that they are much more efficient in terms of model size), LCNs have also been used in different contexts (e.g., \cite{bruna2013spectral, chen2015locally, liu2020lc}). It has been recently observed that in some cases, the performance of LCNs is on par with CNNs \citep{neyshabur2020towards}. So, even if weight sharing explains some of the advantage of CNNs, it clearly doesn't tell the whole story.

Finally, a key property of CNN architectures is their strong utilization of \textbf{locality} in the data. Each neuron in a CNN is limited to a local receptive field of the input, hence encoding a strong locality bias. In this work we demonstrate how CNNs can leverage the local structure of the input, giving them a clear advantage in terms of \textit{computational complexity}. Our results hint that \textbf{locality} is the principal property that explains the advantage of using CNNs. 

Our main result is a computational separation result between CNNs and FCNs. To show this result, we introduce a family of functions that have a very strong local structure, which we call \textbf{$k$-patterns}. A \textbf{$k$-pattern} is a function that is determined by $k$ \textit{consecutive} bits of the input.
We show that for inputs of $n$ bits, when the target function is a $(\log n)$-pattern, training a CNN of polynomial size with gradient-descent achieves small error in polynomial time. However, gradient-descent will fail to learn $(\log n)$-patterns, when training a FCN of polynomial-size.

\subsection{Related Work}

It has been empirically observed that CNN architectures perform much better than FCNs on computer vision tasks, such as digit recognition and image classification (e.g., \cite{urban2017deep, driss2017comparison}). While some works have applied various techniques to improve the performance of FCNs (\cite{lin2015far, fernando2016convolution, neyshabur2020towards}), there is still a gap between performance of CNNs and FCNs, where the former give very good performance ``out-of-the-box''. The focus of this work is to understand, from a theoretical perspective, why CNNs give superior performance when trained on input with strong local structure.

Various theoretical works show the advantage of architectures that leverage local and hierarchical structure. The work of \cite{poggio2015theory} shows the advantage of using deep hierarchical models over wide and shallow functions. These results are extended in \cite{poggio2017and}, showing an exponential gap between deep and shallow networks, when approximating locally compositional functions. The works of \cite{mossel2016deep, malach2018provably} study learnability of deep hierarchical models. The work of \cite{cohen2017analysis} analyzes the expressive efficiency of convolutional networks via hierarchical tensor decomposition. While all these works show that indeed CNNs powerful due to their hierarchical nature and the efficiency of utilizing local structure, they do not explain why these models are superior to fully-connected models.

There are a few works that provide a theoretical analysis of CNN optimization. The works of \cite{brutzkus2017globally, du2018gradient} show that gradient-descent can learn a shallow CNN with a single filter, under various distributional assumptions. The work of \cite{zhang2017convexified} shows learnability of a convex relaxation of convolutional networks. While these works focus on computational properties of learning CNNs, as we do in this work, they do not compare CNNs to FCNs, but focus only on the behavior of CNNs. The works of \cite{cohen2016inductive, novak2018bayesian} study the implicit bias of simplified CNN models. However, these result are focused on generalization properties of CNNs, and not on computational efficiency of the optimization.

\section{Definitions and Notations}

\begin{figure}
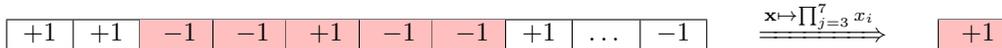

\begin{center}
\begin{tabular}{|l|c|c|c|c|c|c|c|c|r|}
  \hline
  $+1$ & $+1$ & \cellcolor{red!25} $-1$ & \cellcolor{red!25} $-1$ & \cellcolor{red!25} $+1$ & \cellcolor{red!25} $-1$ & \cellcolor{red!25} $-1$ & $+1$ & $\dots$ & $-1$\\
  \hline
\end{tabular} \hspace{0.5cm} $\xRightarrow[]{\x \mapsto \prod_{j=3}^{7} x_i}$ \hspace{0.5cm}
\begin{tabular}{|c|}
  \hline
  \cellcolor{red!25} $+1$ \\
  \hline
\end{tabular}
\end{center}
\caption{Example of a $k$-pattern with $k=5$.}
\end{figure}

Let $\cx = \{\pm 1\}^n$ be our instance space, and let $\cy = \{\pm 1\}$ be the label space. Throughout the paper, we focus on learning a binary classification problem using the hinge-loss: $\ell(\hat{y},y) = \max \{1-y \hat{y}, 0\}$. Given some distribution $\cd$ over $\cx$, some target function $f : \cx \to \cy$ and some hypothesis $h : \cx \to \cy$, we define the loss of $h$ with respect to $f$ on the distribution $\cd$ by:
\[
L_{f,\cd}(h) = \Mean{\x \sim \cd}{\ell(h(\x), f(\x))}
\]

The goal of a supervised learning algorithm is, given access to examples sampled from $\cd$ and labeled by $f$, to find a hypothesis $h$ that minimizes $L_{f,\cd}(h)$. We focus on the gradient-descent (GD) algorithm: given some parametric hypothesis class $\ch = \{h_{\bw} ~:~ \bw \in \reals^q\}$, gradient-descent starts with some (randomly initialized) hypothesis $h_{\bw^{(0)}}$ and, for some learning rate $\eta > 0$, updates:
\[
\bw^{(t)} = \bw^{(t-1)} - \eta \nabla_{\bw} L_{f,\cd}(h_{\bw^{(t-1)}})
\]

We compare the behavior of gradient-descent, when learning two possible neural network architectures: a \textbf{convolutional network} (CNN) and a \textbf{fully-connected network} (FCN).

\begin{definition}
\label{def:cnn}
A \textbf{convolutional network} $h_{\bu, W,\bb}$ is defined as follows:
\[
h_{\bu,W,b}(\x) = \sum_{j=1}^{n-k}\inner{\bu^{(j)}, \sigma(W \x_{j \dots j+k} + \bb)}
\]
for activation function $\sigma$, with kernel $W \in \reals^{q \times k}$, bias $\bb \in \reals^q$ and readout layer $\bu^{(1)}, \dots, \bu^{(n)} \in \reals^q$. Note that this is a standard depth-2 CNN with kernel $k$, stride $1$ and $q$ filters.
\end{definition}

\begin{definition}
A \textbf{fully-connected network} $h_{\bu, \bw, \bb}$ is defined as follows:
\[
h_{\bu,\bw, \bb}(\x) = \sum_{i=1}^q u_i \sigma\left(\inner{\bw^{(i)}, \x} + b_i\right)
\]
for activation function $\sigma$, first layer $\bw^{(1)}, \dots, \bw^{(q)} \in \reals^n$, bias $\bb \in \reals^q$ and second layer $\bu \in \reals^q$.
\end{definition}

We demonstrate the advantage of CNNs over FCNs by observing a problem that can be learned using CNNs, but is hard to learn using FCNs. We call this problem the \textbf{$k$-pattern} problem:
\begin{definition}
A function $f : \cx \to \cy$ is a \textbf{$k$-pattern}, if for some $g : \{\pm 1\}^k \to \cy$ and index $j^*$:
\[
f(\x) = g(x_{j^* \dots j^*+k})
\]
\end{definition}
Namely, a \textbf{$k$-pattern} is a function that depends only on a small pattern of \textit{consecutive} bits of the input. The \textbf{$k$-pattern problem} is the problem of learning $k$-patterns: for some $k$-pattern $f$ and some distribution $\cd$ over $\cx$, given access to $\cd$ labeled by $f$, find a hypothesis $h$ with $L_{f,\cd}(h) \le \epsilon$.

\section{CNNs Efficiently Learn ($\log n$)-Patterns}

The main result in this section shows that gradient-descent can learn $k$-patterns when training convolutional networks for $poly(2^k,n)$ iterations, and when the network has $poly(2^k,n)$ neurons:

\begin{theorem}
\label{thm:cnn_convergence}
Assume we uniformly initialize $W^{(0)} \sim \{\pm 1/k\}^{q \times k}$, $b_i = 1/k -1$ and $\bu^{(0,j)} = 0$ for every $j$.
Assume the activation $\sigma$ satisfies $\abs{\sigma} \le c, \abs{\sigma'} \le 1$, for some constant $c$.
Fix some $\delta > 0$, some $k$-pattern $f$ and some distribution $\cd$ over $\cx$. Then, if $q > 2^{k+3} \log(2^k/\delta)$, with probability at least $1-\delta$ over the initialization, when training a convolutional network $h_{\bu,W,\bb}$ using gradient descent with $\eta = \frac{\sqrt{n}}{\sqrt{q}T}$ we have:
\[
\frac{1}{T}\sum_{t=1}^T L_{f,\cd}(h_{\bu^{(t)}, W^{(t)}, b})
\le \frac{2c n^2 k^2 2^{k}}{q} + \frac{2(2^{k}k)^2}{\sqrt{qn}} +  \frac{c^2n^{1.5}\sqrt{q}}{T} 
\]
\end{theorem}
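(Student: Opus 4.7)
The overall plan is to view training as online gradient descent on the readout weights $\bu$, while treating the filters $W$ as slowly drifting random features. I would combine three ingredients: a random-feature representation showing that the target $k$-pattern $f$ is exactly realized by some small-norm readout $\bu^*$ applied to the initial filters $W^{(0)}$; the standard online-convex-optimization regret bound for $\bu$, since for any fixed $W$ the hinge loss is convex in $\bu$; and a Lipschitz perturbation argument controlling how much $L_{f,\cd}(h_{\bu^*, W^{(t)}, \bb})$ changes as $W$ drifts from $W^{(0)}$ during training.

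Step~1, the representation, is the heart of the argument. For each pattern $y \in \{\pm 1\}^k$ put $S_y = \{i : W^{(0,i)} = y/k\}$. Since each filter is drawn uniformly from $\{\pm 1/k\}^k$, $|S_y|$ is binomial with mean $q/2^k$, and a Chernoff plus union bound over the $2^k$ patterns shows that the assumption $q > 2^{k+3}\log(2^k/\delta)$ forces $|S_y| \ge q/2^{k+1}$ for every $y$ simultaneously, with probability at least $1-\delta$. The choice of bias $b_i = 1/k - 1$ is precisely tuned so that $\inner{y/k, z} + b_i = (1 - 2 d_H(y,z))/k$ equals $1/k$ when $z = y$ and is at most $-1/k$ otherwise: a filter's post-bias pre-activation peaks exactly on windows that match its pattern. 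I would then set $\bu^{(*,j)} = 0$ for $j \ne j^*$ and, at $j = j^*$, place on each bucket $S_y$ coefficients solving the $2^k \times 2^k$ linear system $\sum_y c_y \sigma\bigl((1 - 2 d_H(y,z))/k\bigr) = g(z)$, yielding $h_{\bu^*, W^{(0)}, \bb} \equiv f$ with $\|\bu^*\|^2 = O(k^2 4^k / q)$.

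Step~2 is the textbook online gradient descent regret bound. Since $h_{\bu, W, \bb}$ is linear in $\bu$ and the hinge loss is convex and $1$-Lipschitz, for any fixed comparator $\bu^*$ we have
\[
\frac{1}{T}\sum_{t=1}^T \bigl(L_{f,\cd}(h_{\bu^{(t)}, W^{(t)}, \bb}) - L_{f,\cd}(h_{\bu^*, W^{(t)}, \bb})\bigr) \le \frac{\|\bu^*\|^2}{2\eta T} + \frac{\eta}{2}\max_t \|\nabla_\bu L\|^2.
\]
Bounding $\|\nabla_\bu L\|^2 \le c^2 n q$ from $|\sigma| \le c$ and substituting $\eta = \sqrt{n}/(\sqrt{q}T)$ produces the third term $\frac{c^2 n^{1.5}\sqrt{q}}{T}$ of the theorem; combining with $\|\bu^*\|^2 = O(k^2 4^k/q)$ gives the second term $O((2^k k)^2/\sqrt{nq})$.

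For Step~3, I would iterate the two bounds $\|\bu^{(t)}\| \le \eta t\, c\sqrt{nq}$ (using $\bu^{(0)} = 0$) and $\|\nabla_W L\|_F \le \sqrt{nk}\,\|\bu^{(t-1)}\|$ (using $|\sigma'| \le 1$), so that summing the step sizes yields $\|W^{(t)} - W^{(0)}\|_F = O(cn^2\sqrt{k}/\sqrt{q})$. Combining with $|h_{\bu^*, W, \bb}(\x) - h_{\bu^*, W^{(0)}, \bb}(\x)| \le \sqrt{k}\,\|\bu^*\|\,\|W - W^{(0)}\|_F$ and the $1$-Lipschitzness of the hinge loss produces $|L_{f,\cd}(h_{\bu^*, W^{(t)}, \bb}) - L_{f,\cd}(h_{\bu^*, W^{(0)}, \bb})| = O(ck^2 n^2 2^k / q)$, matching the first term. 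Since $L_{f,\cd}(h_{\bu^*, W^{(0)}, \bb}) = 0$ by Step~1, adding the three bounds yields the claimed inequality. The hardest part will be Step~1: one must invert the Hamming-distance kernel matrix $M_{y,z} = \sigma\bigl((1 - 2 d_H(y,z))/k\bigr)$ and simultaneously keep the representing coefficients small enough that $\|\bu^*\|$ meets the advertised $O(k^2 4^k/q)$ bound, which requires exploiting the combinatorial structure of $M$ rather than merely $|\sigma'| \le 1$; the remaining OCO regret bound and Frobenius-norm drift estimate are routine bookkeeping.
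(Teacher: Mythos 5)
Your overall skeleton --- realizability of $f$ by a small-norm readout $\bu^*$ over the random initial filters, the online-gradient-descent regret bound in $\bu$ (which is the convex parameter), and a Lipschitz drift bound on $W^{(t)}-W^{(0)}$ feeding back into $L_{f,\cd}(h_{\bu^*,W^{(t)},\bb})$ --- is exactly the paper's decomposition, and your Steps 2 and 3 reproduce the paper's Lemmas on the norm of $\bu^{(j,t)}$, the drift of $W$, and the resulting loss perturbation with the correct orders of magnitude. However, there is a genuine gap in your Step 1, which you yourself flag as the hardest part but do not resolve: you propose inverting the $2^k\times 2^k$ matrix $M_{y,z}=\sigma\bigl((1-2d_H(y,z))/k\bigr)$ and controlling the norm of the solution, but under the stated hypotheses ($\abs{\sigma}\le c$, $\abs{\sigma'}\le 1$) this matrix need not be invertible at all, and even when it is, nothing in your plan bounds $\norm{M^{-1}}$, so the advertised $\norm{\bu^*}^2 = O(k^2 4^k/q)$ is not established. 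The missing idea is that no linear system needs to be solved: with the ReLU activation (which is what the paper intends --- note the corollary's appeal to VC bounds for ReLU networks), the computation you already did shows the pre-activation is exactly $1/k$ when $\z$ matches $\sign\bw^{(i)}$ and at most $-1/k$ otherwise, so after the ReLU each neuron computes the exact indicator $(1/k)\ind\{\sign\bw^{(i)}=\z\}$. Your matrix $M$ is therefore $(1/k)I$, and one simply sets $\bu^{*(j^*)}_i = \frac{k}{\abs{J_\z}}g(\z)$ for $i\in J_\z$, which immediately yields both $h_{\bu^*,W^{(0)},\bb}=f$ and $\norm{\bu^{*(j^*)}}^2=\sum_{\z}k^2/\abs{J_\z}\le 4(2^kk)^2/q$ via the Chernoff lower bound on $\abs{J_\z}$ that you correctly derived. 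Once you replace the kernel-inversion step with this observation, the rest of your argument goes through and coincides with the paper's proof.
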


Before we prove the theorem, observe that the above immediately implies that when $k = O(\log n)$, gradient-descent can \textbf{efficiently} learn to solve the $k$-pattern problem, when training a CNN:
\begin{corollary}
Let $k = O(\log n)$. Then, running GD on a CNN with $q = O(\epsilon^{-2} n^3 \log^2 n)$ neurons for $T = O(\epsilon^{-2} n^3 \log n)$ iterations, using a sample $S \sim \cd$ of size $O(\epsilon^{-2}nkq\log(nkq/\delta))$, learns the $k$-pattern problem up to accuracy $\epsilon$ w.p. $\ge 1-\delta$.
\end{corollary}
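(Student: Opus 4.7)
The plan has two parts: first, verify that with the stated choices of $q$ and $T$ the bound in Theorem~\ref{thm:cnn_convergence} reduces to $O(\epsilon)$ on average; second, use standard uniform convergence to upgrade this population-level guarantee into a sample-based learning statement, which is where the sample-size $|S|=O(\epsilon^{-2}nkq\log(nkq/\delta))$ enters.

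For the first part, since $k=O(\log n)$ we have $2^k = O(n^c)$ for a constant $c$ (in fact $2^k \le n$ when $k \le \log n$) and $k = O(\log n)$. Substituting $q = \Theta(\epsilon^{-2} n^3 \log^2 n)$ and $T = \Theta(\epsilon^{-2} n^{3} \log n)$ into the three terms of Theorem~\ref{thm:cnn_convergence} shows that each of $\frac{2cn^2 k^2 2^k}{q}$, $\frac{2(2^k k)^2}{\sqrt{qn}}$, and $\frac{c^2 n^{1.5}\sqrt{q}}{T}$ is $O(\epsilon)$, so that $\frac{1}{T}\sum_{t=1}^T L_{f,\cd}(h_{\bu^{(t)},W^{(t)},b}) = O(\epsilon)$. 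By averaging, some iterate $t^\ast \in [T]$ satisfies $L_{f,\cd}(h_{\bu^{(t^\ast)},W^{(t^\ast)},b}) \le O(\epsilon)$. This step is just bookkeeping; the only care needed is to track $\log$-factors so that a single polynomial choice of $q$ balances all three terms simultaneously.

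For the second part, Theorem~\ref{thm:cnn_convergence} is stated with the population gradient $\nabla_{\bw} L_{f,\cd}$, whereas the algorithm only sees a sample $S \sim \cd$. The plan is to run gradient descent on the empirical loss $L_{f,S}$ and show: (i) along the optimization trajectory the empirical and population gradients are uniformly close, so the conclusion of Theorem~\ref{thm:cnn_convergence} still holds up to an additive $O(\epsilon)$ slack; and (ii) the empirical and population losses of the iterate $t^\ast$ picked by (i) agree up to $\epsilon/2$. Both statements reduce to a standard Rademacher-complexity / covering-number bound for depth-$2$ CNNs with $q$ filters of width $k$ on $n$-bit inputs, whose effective parameter count is $\Theta(nkq)$; combined with the $1$-Lipschitzness and boundedness of the hinge loss, this is exactly what produces the quoted sample size $|S| = O(\epsilon^{-2} nkq \log(nkq/\delta))$, with a union bound over the $T$ iterates contributing only a logarithmic factor absorbed into the $\log(nkq/\delta)$ term.

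The main obstacle, and the step I would spend the most care on, is the trajectory-level uniform convergence in part (ii): one must bound how far $W^{(t)}$ and $\bu^{(t)}$ travel from initialization across all $T$ updates (using $\eta = \sqrt{n}/(\sqrt{q} T)$ and $|\sigma|,|\sigma'|$ bounded), so that the optimization stays inside a norm-bounded region on which the Rademacher complexity of the CNN class is controlled. Once this is established, concatenating parts one and two yields, with probability $\ge 1-\delta$ over both initialization and sample, an iterate achieving $L_{f,\cd} \le \epsilon$, which is exactly the statement of the corollary.
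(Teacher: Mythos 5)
Your first part (plugging $q$ and $T$ into Theorem~\ref{thm:cnn_convergence} and extracting a good iterate by averaging) matches the paper's bookkeeping. The divergence, and the genuine gap, is in how you handle the sample. You apply Theorem~\ref{thm:cnn_convergence} to the \emph{population} loss and then try to argue that gradient descent on the \emph{empirical} loss follows essentially the same trajectory, via uniform concentration of the gradients $\nabla L_{f,S}$ around $\nabla L_{f,\cd}$ along all $T$ iterates. That step is not carried out, and it is substantially harder than you suggest: uniform convergence of vector-valued gradients over a parameter region is not the same as uniform convergence of the scalar loss, it involves $\sigma'$ (bounded but not Lipschitz in general), and even granting it you would still need a separate stability argument showing that GD with $O(\epsilon)$-perturbed gradients retains the regret guarantee of Theorem~\ref{thm:ogd}. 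None of this is needed.

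The observation you are missing is that Theorem~\ref{thm:cnn_convergence} holds for an \emph{arbitrary} distribution $\cd$ over $\cx$ --- in particular for the empirical distribution $\widehat{\cd}$ that is uniform over the sample $S$. This is exactly what the paper does: run GD on $L_{f,\widehat{\cd}}$, so the algorithm analyzed is literally the one in the theorem and some iterate attains \emph{train} loss at most $\epsilon/2$; then a single, one-shot generalization bound (via the $O(W\log W)$ VC dimension of depth-2 ReLU networks with $W$ weights, giving the quoted sample size $O(\epsilon^{-2}nkq\log(nkq/\delta))$) converts train loss to population loss. No trajectory-level concentration is required. If you reorganize your argument this way, your part~(ii) collapses into the standard VC step you already invoke, and the proof goes through.
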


\begin{proof}
Sample $S \sim \cd$, and let $\widehat{\cd}$ be the uniform distribution over $S$. Then, from Theorem \ref{thm:cnn_convergence} and the choice of $q$ and $T$ there exists $t \in [T]$ with $L_{f,\widehat{\cd}}(h_{\bu^{(t)}, W^{(t)}, b}) \le \epsilon/2$, i.e. GD finds a hypothesis with train loss at most $\epsilon/2$. Now, using the fact the $\mathrm{VC}$ dimension of depth-2 ReLU networks with $W$ weights is $O(W \log W)$ (see \cite{bartlett2019nearly}), we can bound the generalization gap by $\epsilon/2$.
\end{proof}

To prove Theorem \ref{thm:cnn_convergence}, we show that, for a large enough CNN, the $k$-pattern problem becomes linearly separable, after applying the first layer of the randomly initialized CNN:

\begin{lemma}
\label{lem:cnn_initialization}
Assume we uniformly initialize $W \sim \{\pm 1/k\}^{q \times k}$ and $b_i = 1/k -1$. Fix some $\delta > 0$. Then if $q > 2^{k+3} \log(2^k/\delta)$, w.p. $\ge 1-\delta$ over the choice of $W$, for every $k$-pattern $f$ there exist $\bu^{*(1)}, \dots, \bu^{*(n-k)} \in \reals^q$ with $\norm{\bu^{*(j^*)}} \le \frac{2^{k+1} k}{\sqrt{q}}$ and $\norm{\bu^{*(j)}} = 0$ for $j \ne j^*$, s.t. $h_{\bu^*, W, b} = f(\x)$.
\end{lemma}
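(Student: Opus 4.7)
The plan is to reduce the lemma to a statement about a single patch of the input and then show that a random feature map on that patch is expressive enough to fit any binary function of $k$ bits. Since $\bu^{*(j)} = 0$ for all $j \ne j^*$, the CNN collapses to
\[
h_{\bu^*,W,\bb}(\x) \;=\; \inner{\bu^{*(j^*)}, \sigma(W \x_{j^* \dots j^*+k} + \bb)},
\]
which depends only on the $k$-bit window $\z := \x_{j^* \dots j^*+k}$. So it suffices to show that for every target $g : \{\pm 1\}^k \to \{\pm 1\}$ there is a $\bu \in \reals^q$ with $\norm{\bu} \le 2^{k+1} k / \sqrt{q}$ satisfying $\inner{\bu, \sigma(W\z + \bb)} = g(\z)$ for all $\z \in \{\pm 1\}^k$.

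The next step is to exploit the particular initialization. Since $\bw^{(i)} \in \{\pm 1/k\}^k$ and $\z \in \{\pm 1\}^k$, the inner product $\inner{\bw^{(i)}, \z}$ attains its maximum value $1$ exactly when $\z = k\bw^{(i)}$, and is at most $1 - 2/k$ otherwise. Adding $b_i = 1/k - 1$ turns the pre-activation into $+1/k$ for the unique matching $\z$ and into some value $\le -1/k$ for every other $\z$. Taking $\sigma$ to be the (scaled) ReLU --- which I expect to be the intended choice in this lemma despite the weaker assumption later in Theorem \ref{thm:cnn_convergence} --- this gives the clean indicator feature
\[
\sigma\bigl(\inner{\bw^{(i)},\z} + b_i\bigr) \;=\; \tfrac{1}{k}\,\mathbf{1}\bigl[k\bw^{(i)} = \z\bigr].
\]

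I would then control the bucket sizes $S_\z := \{i \in [q] : k\bw^{(i)} = \z\}$ by concentration. Each row $\bw^{(i)}$ is uniform on $\{\pm 1/k\}^k$, so $\abs{S_\z}$ is a sum of $q$ independent $\mathrm{Bernoulli}(2^{-k})$ variables with mean $q/2^k$. A multiplicative Chernoff bound gives $\Pr[\abs{S_\z} < q/2^{k+1}] \le \exp(-q/2^{k+3})$; a union bound over the $2^k$ choices of $\z$ together with the hypothesis $q > 2^{k+3}\log(2^k/\delta)$ ensures that $\abs{S_\z} \ge q/2^{k+1}$ for every $\z$ simultaneously, with probability at least $1-\delta$. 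On this event I define $\bu$ by $u_i = k g(\z)/\abs{S_\z}$ for $i \in S_\z$, which immediately gives $\inner{\bu, \sigma(W\z+\bb)} = \sum_{i \in S_\z} u_i/k = g(\z)$, and
\[
\norm{\bu}^2 \;=\; \sum_{\z \in \{\pm 1\}^k} \frac{k^2 g(\z)^2}{\abs{S_\z}} \;\le\; 2^k \cdot \frac{2 \cdot 2^k k^2}{q} \;=\; \frac{2^{2k+1} k^2}{q},
\]
so $\norm{\bu} \le \sqrt{2}\,2^k k/\sqrt{q} \le 2^{k+1} k/\sqrt{q}$, as required.

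The main obstacle I foresee is reconciling this clean indicator-based argument with the weaker assumption $\abs{\sigma} \le c, \abs{\sigma'} \le 1$ assumed in Theorem \ref{thm:cnn_convergence}. For a smooth bounded $\sigma$ the construction above no longer annihilates the non-matching rows exactly, so one would instead need to set up and invert the $2^k \times 2^k$ linear system induced by the feature vectors $\{\sigma(W\z + \bb)\}_\z$, and then verify that its conditioning does not blow up the bound $2^{k+1}k/\sqrt{q}$. The remaining ingredients --- the patch-localization reduction, the Chernoff estimate, and the bookkeeping for $\norm{\bu}$ --- are essentially routine.
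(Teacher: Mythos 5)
Your proposal is correct and matches the paper's own proof essentially step for step: the same indicator-feature construction from the $\{\pm 1/k\}$ initialization and the bias $b_i = 1/k - 1$, the same Chernoff-plus-union-bound control of the bucket sizes, and the same choice $u_i = k g(\z)/\abs{S_\z}$ with the same norm bookkeeping (yours is even a factor $\sqrt{2}$ tighter). Your worry about reconciling the argument with the weaker activation assumption of Theorem \ref{thm:cnn_convergence} is fair, but the paper silently makes the same ReLU-like assumption at this step, so no additional work is expected of you here.
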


\begin{proof}
Fix some $\z \in \{\pm 1\}^k$, then for every $\bw^{(i)} \sim \{\pm 1/k\}^k$, we have: $\prob{\sign \bw^{(i)} = \z} = 2^{-k}$.
Denote by $J_\z \subseteq [q]$ the subset of indexes satisfying $\sign \bw^{(i)} = \z$, for every $i \in J_\z$, and note that $\E_{W} \abs{J_\z} = q2^{-k}$. From Chernoff bound:
\[
\prob{\abs{J_\z} \le q 2^{-k}/2} \le e^{-q2^{-k}/8} \le \delta 2^{-k}
\]
by choosing $q >2^{k+3} \log(2^k/\delta)$.
So, using the union bound, w.p. at least $1-\delta$, for every $\z \in \{\pm 1\}^k$ we have $\abs{J_\z} \ge q 2^{-k-1}$. By the choice of $b_i$ we have
$\sigma(\inner{\bw^{(i)}, \z} + b_i) = (1/k)\ind\{\sign \bw^{(i)} = \z\}$.

Now, fix some $k$-pattern $f$, where $f(\x) = g(\x_{j^*, \dots, j^*+k})$.
For every $i \in J_\z$ we choose $\bu^{*(j^*)}_{i} = \frac{k}{\abs{J_\z}}g(\z)$ and $\bu^{*(j)} = 0$ for every $j \ne j^*$. Therefore, we get:
\begin{align*}
h_{\bu^*, W,b}(\x) &= \sum_{j=1}^{n-k}\inner{\bu^{*(j)}, \sigma(W \x_{j \dots j+k} + \bb)} 
= \sum_{\z \in \{\pm 1\}^k} \sum_{i \in J_\z} \bu_{i}^{*(j^*)} \sigma\left( \inner{\bw^{(i)}, \x_{j^*\dots j^*+k}} + b_i \right) \\
 &= \sum_{\z \in \{\pm 1\}^k} \ind\{\z = \x_{j^*\dots j^*+ k}\} g(\z) = g(x_{j^*\dots j^*+k}) = f(\x)
\end{align*}
Note that by definition of $\bu^{*(j^*)}$ we have $\norm{\bu^{*(j*)}}^2 =\sum_{\z \in \{\pm 1\}^k} \sum_{i \in J_\z} \frac{k^2}{\abs{J_\z}^2} \le 4 \frac{(2^k k)^2}{q }$.
\end{proof}

\begin{remark}
Admittedly, the initialization assumed above is non-standard, but is favorable for the analysis.
A similar result can be shown for more natural initialization (e.g., normal distribution), using known results from random features analysis (for example, \cite{bresler2020corrective}).
\end{remark}

From Lemma \ref{lem:cnn_initialization} and known results on learning linear classifiers with gradient-descent, solving the $k$-pattern problem can be achieved by optimizing the second layer of a randomly initialized CNN. However, since in gradient-descent we optimize both layers of the network, we need a more refined analysis to show that full gradient-descent learns to solve the problem. We follow the scheme introduced in \cite{daniely2017sgd}, adapting it our setting.

We start by showing that the first layer of the network does not deviate from the initialization during the training:

\begin{lemma}
\label{lem:norm_diff}
We have $\norm{\bu^{(j,T)}} \le \eta T \sqrt{q}$ for all $j \in [n-k]$, and $\norm{W^{(0)}-W^{(T)}} \le c \eta^2 T^2 n\sqrt{qk}$
\end{lemma}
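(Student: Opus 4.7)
The plan is to bound both quantities by triangle inequality over the $T$ gradient-descent updates, since $\bu^{(j,0)} = 0$ and $W$ starts at $W^{(0)}$. For each iteration $s$, I only need a (possibly $s$-dependent) bound on the gradient norms $\norm{\nabla_{\bu^{(j)}} L(h_{\bu^{(s)}, W^{(s)}, b})}$ and $\norm{\nabla_W L(h_{\bu^{(s)}, W^{(s)}, b})}$. In both cases the hinge loss contributes a factor at most $1$ to the outer derivative (the derivative of $\max\{1-y\hat{y},0\}$ in $\hat{y}$ lies in $\{-y,0\}$), so it suffices to bound the gradient of $h$ itself and then push the expectation over $\x \sim \cd$ inside the norm via Jensen's inequality.

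The bound on $\bu^{(j)}$ is essentially immediate: $\nabla_{\bu^{(j)}} h(\x) = \sigma(W \x_{j\dots j+k} + \bb)$, and since $|\sigma| \le c$ componentwise this vector has Euclidean norm at most $c\sqrt{q}$. Summing these per-step gradient norms, each scaled by $\eta$, yields $\norm{\bu^{(j,T)}} \le \eta T \sqrt{q}$ up to the absorbed constant $c$.

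For $W$ I would differentiate row by row. The $i$-th row of $\nabla_W h(\x)$ equals $\sum_{j=1}^{n-k} u_i^{(j)}\, \sigma'\!\left(\inner{\bw^{(i)}, \x_{j\dots j+k}} + b_i\right) \x_{j\dots j+k}$. Using $|\sigma'|\le 1$, $\norm{\x_{j\dots j+k}} = \sqrt{k}$, and Cauchy-Schwarz over the $n-k$ positions, the squared norm of this row is at most $kn \sum_j (u_i^{(j)})^2$; summing over $i$ gives $\norm{\nabla_W h}_F^2 \le kn \sum_j \norm{\bu^{(j)}}^2$. Plugging in the first bound $\norm{\bu^{(j,s)}} \le c\eta s\sqrt{q}$ at step $s$ gives $\norm{\nabla_W L(s)}_F \le c\eta s\, n\sqrt{qk}$, and summing this arithmetic progression from $s=0$ to $T-1$ (with the extra $\eta$ from the update rule) produces the claimed $c\eta^2 T^2 n\sqrt{qk}$.

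The main obstacle is the $W$ bound: its gradient scales with $\bu$, which itself grows linearly in $s$, producing the quadratic dependence on $T$. Getting the correct polynomial factors in $n$, $k$, and $q$ requires applying Cauchy-Schwarz at the right granularity — first over positions $j$ for each filter $i$, then summing over filters — rather than bounding coordinates uniformly, which would introduce a lossy extra factor of $\sqrt{q}$. Beyond this, the argument is standard bookkeeping of norms propagated through one hinge-loss derivative and one activation derivative.
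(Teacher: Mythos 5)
Your proposal is correct and follows essentially the same route as the paper: bound the $\bu^{(j)}$-gradient by $c\sqrt{q}$ using $\abs{\sigma}\le c$ and $\abs{\ell'}\le 1$, telescope over the $T$ updates, then feed the resulting linear-in-$t$ bound on $\norm{\bu^{(j,t)}}$ into the $W$-gradient to get the quadratic dependence on $T$. The only cosmetic difference is that you apply Cauchy--Schwarz over the positions $j$ where the paper simply uses the triangle inequality on the rank-one terms $\bu^{(j,t)}\x_{j\dots j+k}^\top$; both yield the same $c\eta^2T^2n\sqrt{qk}$ bound (and, like the paper's own proof, your first bound carries a factor $c$ that the lemma statement omits).
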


We can now bound the difference in the loss when the weights of the first layer change during the training process:

\begin{lemma}
\label{lem:loss_diff}
For every $\bu^*$ we have:
\[
\abs{L_{f,\cd}(h_{\bu^*,W^{(T)},b}) - L_{f,\cd}(h_{\bu^*,W^{(0)},b})} \le c \eta^2 T^2 n k \sqrt{q}\sum_{j=1}^{n-k} \norm{\bu^{*(j)}}
\]
\end{lemma}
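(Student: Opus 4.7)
The plan is to reduce this to bounding the pointwise discrepancy $|h_{\bu^*,W^{(T)},b}(\x) - h_{\bu^*,W^{(0)},b}(\x)|$ uniformly in $\x$, and then invoke the $1$-Lipschitz property of the hinge loss in its first argument. That immediately yields
\[
\bigl|L_{f,\cd}(h_{\bu^*,W^{(T)},b}) - L_{f,\cd}(h_{\bu^*,W^{(0)},b})\bigr| \le \Mean{\x \sim \cd}{\bigl|h_{\bu^*,W^{(T)},b}(\x) - h_{\bu^*,W^{(0)},b}(\x)\bigr|},
\]
so it suffices to bound the integrand by $c\eta^2 T^2 n k \sqrt{q}\sum_j \norm{\bu^{*(j)}}$.

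Writing out the definition of the CNN, the difference decomposes as
\[
h_{\bu^*,W^{(T)},b}(\x) - h_{\bu^*,W^{(0)},b}(\x) = \sum_{j=1}^{n-k} \bigl\langle \bu^{*(j)},\, \sigma(W^{(T)}\x_{j\dots j+k}+\bb) - \sigma(W^{(0)}\x_{j\dots j+k}+\bb)\bigr\rangle,
\]
and I would apply the triangle inequality followed by Cauchy--Schwarz on each inner product. The per-$j$ bound then becomes $\norm{\bu^{*(j)}}$ times the norm of the activation difference, which by the assumption $|\sigma'|\le 1$ (so $\sigma$ applied coordinatewise is $1$-Lipschitz in $\ell_2$) is at most $\norm{(W^{(T)}-W^{(0)})\x_{j\dots j+k}}$.

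Now I would control this vector norm by the operator norm, bounded in turn by the Frobenius norm: using $\norm{\x_{j\dots j+k}}_2 = \sqrt{k}$ since $\x \in \{\pm 1\}^n$, one gets
\[
\norm{(W^{(T)}-W^{(0)})\x_{j\dots j+k}} \le \norm{W^{(T)}-W^{(0)}}_F \cdot \sqrt{k} \le c\eta^2 T^2 n\sqrt{qk}\cdot \sqrt{k} = c\eta^2 T^2 n k\sqrt{q},
\]
where the second inequality is exactly Lemma \ref{lem:norm_diff}. Summing over $j$ yields the claimed bound on the pointwise discrepancy, and hence on the loss difference.

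The only delicate point is being careful with which matrix norm is being inherited from Lemma \ref{lem:norm_diff} and how the factor of $\sqrt{k}$ from $\norm{\x_{j\dots j+k}}$ combines with it to produce the final $k\sqrt{q}$ factor; everything else is a mechanical application of Lipschitzness of $\sigma$ and the hinge loss, Cauchy--Schwarz, and the triangle inequality. No concentration or probabilistic argument is needed here, since the statement is a deterministic consequence of Lemma \ref{lem:norm_diff}.
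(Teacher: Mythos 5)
Your proposal is correct and follows essentially the same route as the paper's proof: Lipschitzness of the hinge loss, the per-window decomposition with triangle inequality and Cauchy--Schwarz, Lipschitzness of $\sigma$, and then Lemma \ref{lem:norm_diff} combined with $\norm{\x_{j\dots j+k}} = \sqrt{k}$ to produce the $c\eta^2 T^2 nk\sqrt{q}$ factor. The bookkeeping of $\sqrt{k}\cdot\sqrt{qk} = k\sqrt{q}$ that you flag as the delicate point is handled exactly the same way in the paper.
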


The proofs of Lemma \ref{lem:norm_diff} and Lemma \ref{lem:loss_diff} are shown in the appendix.

Finally, we use the following result on the convergence of online gradient-descent to show that gradient-descent converges to a good solution. The proof of the Theorem is given in \cite{shalev2011online}, with an adaptation to a similar setting in \cite{daniely2020learning}.

\begin{theorem} (Online Gradient Descent)
\label{thm:ogd}
Fix some $\eta$, and let $f_1, \dots, f_T$ be some sequence of convex functions.
Fix some $\theta_1$, and update $\theta_{t+1} = \theta_t - \eta \nabla f_t(\theta_t)$. Then for every $\theta^*$ the following holds:
\[
\frac{1}{T} \sum_{t=1}^{T} f_t(\theta_t) \le \frac{1}{T} \sum_{t=1}^{T} f_t(\theta^*) + \frac{1}{2\eta T} \norm{\theta^*}^2 + \norm{\theta_1}\frac{1}{T}\sum_{t=1}^T \norm{\nabla f_t(\theta_t)} + \eta \frac{1}{T} \sum_{t=1}^T \norm{\nabla f_t(\theta_t)}^2
\]
\end{theorem}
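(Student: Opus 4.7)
The plan is to derive this regret bound directly from convexity and the explicit form of the OGD iterates, rather than via the usual telescoping of the potential $\norm{\theta_t - \theta^*}^2$. The reason for the detour is that the standard telescoping argument produces a leading term $\frac{1}{2\eta T}\norm{\theta_1-\theta^*}^2$, which does not split cleanly into the $\frac{1}{2\eta T}\norm{\theta^*}^2$ and $\norm{\theta_1}$-times-average-gradient-norm form claimed in the theorem. Unrolling the iterates decouples the $\theta_1$ and $\theta^*$ contributions in exactly the way the statement asks for.

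Set $g_t := \nabla f_t(\theta_t)$. Convexity gives $f_t(\theta_t) - f_t(\theta^*) \le \inner{g_t, \theta_t - \theta^*}$, so it suffices to control the right-hand side summed over $t$. Unrolling the update yields $\theta_t = \theta_1 - \eta \sum_{s<t} g_s$, and substituting gives
\[
\sum_{t=1}^T \inner{g_t, \theta_t - \theta^*} \;=\; \inner{\sum_t g_t,\, \theta_1} \;-\; \inner{\sum_t g_t,\, \theta^*} \;-\; \eta \sum_t \inner{g_t, \sum_{s<t} g_s}.
\]
The polarization-style identity $2\sum_t \inner{g_t, \sum_{s<t} g_s} = \norm{\sum_t g_t}^2 - \sum_t \norm{g_t}^2$ then rewrites the last sum, producing a $+\frac{\eta}{2}\sum_t \norm{g_t}^2$ piece (which, loosened by a factor of two, becomes the final term $\eta \cdot \frac{1}{T}\sum_t \norm{g_t}^2$ in the statement) together with a $-\frac{\eta}{2}\norm{\sum_t g_t}^2$ piece to be combined with the $\theta^*$ term.

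The remaining two pieces are bounded as follows. For the $\theta_1$ piece, Cauchy--Schwarz followed by the triangle inequality gives $\inner{\sum_t g_t, \theta_1} \le \norm{\theta_1} \sum_t \norm{g_t}$, which, divided by $T$, is the third term of the bound. For the $\theta^*$ piece, combine $-\inner{\sum_t g_t,\theta^*} - \frac{\eta}{2}\norm{\sum_t g_t}^2$ and complete the square in $v := \sum_t g_t$ (equivalently AM--GM: $\inner{v,\theta^*} \le \frac{\eta}{2}\norm{v}^2 + \frac{1}{2\eta}\norm{\theta^*}^2$), obtaining an upper bound of $\frac{1}{2\eta}\norm{\theta^*}^2$ which, divided by $T$, is the second term.

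The main technical obstacle is not any single inequality but rather handling the cross-term $\sum_t \inner{g_t, \sum_{s<t} g_s}$: naively this is an $O(T^2)$ double sum, and the whole argument only works because the polarization identity collapses it into the two scalars $\norm{\sum_t g_t}^2$ and $\sum_t \norm{g_t}^2$. It is this collapse that lets us complete the square against $\theta^*$ alone and leaves $\theta_1$ paired only with the linear gradient sum, which is the entire point of stating the bound in this asymmetric form (rather than the more familiar $\norm{\theta_1 - \theta^*}^2/(2\eta T)$ form).
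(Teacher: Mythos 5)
Your proof is correct; I verified each step, and in fact you obtain the slightly stronger constant $\tfrac{\eta}{2}$ on the last term before loosening it to match the statement. The paper itself does not prove this theorem --- it defers to \cite{shalev2011online} with the adaptation in \cite{daniely2020learning} --- so the natural comparison is with that standard route: there one telescopes the potential $\norm{\theta_t - \theta^*}^2$, which yields the familiar $\tfrac{1}{2\eta T}\norm{\theta_1-\theta^*}^2$ leading term, and then recovers the asymmetric form in the statement by shifting coordinates (working with $\tilde{\theta}_t = \theta_t - \theta_1$, so $\tilde{\theta}_1 = 0$) and peeling off the $\inner{g_t, \theta_1}$ contribution via Cauchy--Schwarz, exactly producing the $\norm{\theta_1}\tfrac{1}{T}\sum_t\norm{\nabla f_t(\theta_t)}$ term. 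Your unrolled derivation is algebraically equivalent but avoids both the telescoping and the coordinate shift: the polarization identity $2\sum_t \inner{g_t,\sum_{s<t} g_s} = \norm{\sum_t g_t}^2 - \sum_t\norm{g_t}^2$ plays the role that cancellation across consecutive potential differences plays in the classical argument, and completing the square in $v=\sum_t g_t$ against $\theta^*$ replaces the $\norm{\theta_1-\theta^*}^2$ bookkeeping. What your version buys is a fully self-contained, one-pass derivation that makes transparent \emph{why} the bound decouples $\theta_1$ from $\theta^*$ (the point you correctly identify as the reason this nonstandard form is stated); what the telescoping version buys is that it generalizes immediately to projected or time-varying-step variants, where the iterates no longer admit the closed form $\theta_t = \theta_1 - \eta\sum_{s<t} g_s$ on which your expansion relies.
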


\begin{proof} of Theorem \ref{thm:cnn_convergence}.
From Lemma \ref{lem:cnn_initialization}, with probability at least $1-\delta$ over the initialization, there exist $\bu^{*(1)}, \dots, \bu^{*(n-k)} \in \reals^q$ with $\norm{\bu^{*(1)}} \le \frac{2^{k+1} k}{\sqrt{q}}$ and $\norm{\bu^{*(j)}} = 0$ for $j > 1$ such that $h_{\bu^*, W^{(0)}, b}(\x) = f(\x)$, and so $L_{f,\cd}(h_{\bu^*, W^{(0)}, b}) = 0$. Using Theorem \ref{thm:ogd}, since $L_{f,\cd}(h_{\bu, W,b})$ is convex with respect to $\bu$, we have:
\begin{align*}
\frac{1}{T} \sum_{t=1}^T &L_{f,\cd}(h_{\bu^{(t)}, W^{(t)}, b}) \\
&\le \frac{1}{T} \sum_{t=1}^T L_{f,\cd}(h_{\bu^{*}, W^{(t)}, b})
+ \frac{1}{2\eta T} \sum_{j=1}^{n-k}\norm{\bu^{*(j)}}^2 + \eta \frac{1}{T} \sum_{t=1}^T \norm{\frac{\partial}{\partial \bu} L_{f,\cd}(f_{\bu^{(t)}, W^{(t)}, b})}^2 \\
&\le \frac{1}{T} \sum_{t=1}^T L_{f,\cd}(h_{\bu^{*}, W^{(t)}, b})
+ \frac{2(2^{k}k)^2}{q\eta T} + c^2\eta nq = (*)
\end{align*}
Using Lemma \ref{lem:loss_diff} we have:
\begin{align*}
(*) &\le \frac{1}{T} \sum_{t=1}^T L_{f,\cd}(h_{\bu^{*}, W^{(0)}, b})
+ c \eta^2 T^2 n k \sqrt{q}\sum_{j=1}^{n-k} \norm{\bu^{*(j)}}+ \frac{2(2^{k}k)^2}{q\eta T} + c^2\eta nq \\
&\le 2c \eta^2 T^2 n k^2 2^{k}+ \frac{2(2^{k}k)^2}{q\eta T} + c^2\eta nq \\
\end{align*}
Now, choosing $\eta = \frac{\sqrt{n}}{\sqrt{q} T}$ we get the required.
\end{proof}

\subsection{Analysis of Locally-Connected Networks}
The above result shows that polynomial-size CNNs can learn $(\log n)$-patterns in polynomial time. 
As discussed in the introduction, the success of CNNs can be attributed to either the weight sharing or the locality-bias of the architecture. While weight sharing may contribute to the success of CNNs in some cases, we note that it gives no benefit when learning $k$-patterns. Indeed, we can show a similar positive result for locally-connected networks (LCN), which have no weight sharing.

Observe the following definition of a LCN with one hidden-layer:

\begin{definition}
A \textbf{locally-connected network} $h_{\bu, \bw,\bb}$ is defined as follows:
\[
h_{\bu,\W,\bb}(\x) = \sum_{j=1}^{n-k}\inner{\bu^{(j)}, \sigma(W^{(j)} \x_{j \dots j+k} + \bb^{(j)})}
\]
for some activation function $\sigma$, with $W^{(1)}, \dots, W^{(q)} \in \reals^{q \times k}$, bias $\bb^{(1)}, \dots, \bb^{(q)} \in \reals^q$ and readout layer $\bu^{(1)}, \dots, \bu^{(n)} \in \reals^q$.
\end{definition}
Note that the only difference from Definition \ref{def:cnn} is the fact that the weights of the first layer are not shared.
It is easy to verify that Theorem \ref{thm:cnn_convergence} can be modified in order to show a similar positive result for LCN architectures. Specifically, we note that in Lemma \ref{lem:cnn_initialization}, which is the core of the Theorem, we do not use the fact that the weights in the first layer are shared. So, LCNs are ``as good as'' CNNs for solving the $k$-pattern problem. This of course does not resolve the question of comparing between LCN and CNN architectures, which we leave for future work.

\section{Learning $(\log n)$-Patterns with FCN}

In the previous section we showed that patterns of size $\log n$ are efficiently learnable, when using CNNs trained with gradient-descent. In this section we show that, in contrast, gradient-descent fails to learn $(\log n)$-patterns using fully-connected networks, unless the size of the network is super-polynomial (namely, unless the network is of size $n^{\Omega (\log n)}$). For this, we will show an instance of the $k$-pattern problem that is hard for fully connected networks.

We take $\cd$ to be the uniform distribution over $\cx$, and let $f(\x) = \prod_{i \in I} x_i$, where $I$ is some set of $k$ consecutive bits. Specifically, we take $I = \{1, \dots, k\}$, although the same proof holds for any choice of $I$. In this case, we show that the initial gradient of the network is very small, when a fully-connected network is initialized from a permutation invariant distribution. 

\begin{theorem}
\label{thm:fc_hardness}
Assume $\abs{\sigma} \le c, \abs{\sigma'} \le 1$. Let $\cw$ be some permutation invariant distribution over $\reals^n$, and assume we initialize $\bw^{(1)}, \dots, \bw^{(q)} \sim \cw$ and initialize $\bu$ such that $\abs{u_i} \le 1$ and for all $\x$ we have $h_{\bu, \bw}(\x) \in [-1,1]$. Then, the following holds:
\begin{itemize}
\item $\E_{\bw \sim \cw}{\norm{\frac{\partial}{\partial W} L_{f,\cd}(h_{\bu, \bw, \bb})}^2_2 } \le qn \cdot \min \left\lbrace \binom{n-1}{k}^{-1}, \binom{n-1}{k-1}^{-1}\right\rbrace$
\item $\E_{\bw \sim \cw}{\norm{\frac{\partial}{\partial \bu} L_{f,\cd}(h_{\bu,\bw, \bb})}^2_2 } \le c^2q \binom{n}{k}^{-1}$
\end{itemize}
\end{theorem}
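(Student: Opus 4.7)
The plan is to use the boundedness assumption $h_{\bu,\bw,\bb}(\x) \in [-1,1]$ to linearize the hinge loss, reducing each partial derivative to a single Fourier coefficient of a bounded function of $\bw$, and then to exploit the permutation invariance of $\cw$ together with Parseval to show that this coefficient is of order $\binom{n}{k}^{-1}$ in expectation. Concretely, since $f(\x) \in \{\pm 1\}$ and $\abs{h(\x)} \le 1$, we have $f(\x) h(\x) \le 1$ pointwise, so the hinge loss equals $1 - f(\x) h(\x)$ and $L_{f,\cd}(h_{\bu,\bw,\bb}) = 1 - \E_{\x \sim \cd}[f(\x) h(\x)]$ is linear in the network. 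Taking $\cd$ uniform on $\{\pm 1\}^n$ and $f = \chi_I$ with $\chi_I(\x) = \prod_{j \in I} x_j$, $\abs{I} = k$, writing $g_i(\x) := \sigma(\inner{\bw^{(i)}, \x} + b_i)$ and $s_i(\x) := \sigma'(\inner{\bw^{(i)}, \x} + b_i)$, and using the identity $\chi_I(\x) x_j = \chi_{I \triangle \{j\}}(\x)$, the partial derivatives reduce to exact Walsh--Hadamard coefficients:
\[
\frac{\partial L}{\partial u_i} = -\widehat{g_i}(I), \qquad \frac{\partial L}{\partial w^{(i)}_j} = -u_i \, \widehat{s_i}(I \triangle \{j\}).
\]

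The second step is a symmetry argument. For any permutation $\pi$ of $[n]$, a change of variables $\y = \pi^{-1}\x$ shows that $g_i(\x, \pi \bw) = g_i(\pi^{-1}\x, \bw)$, and hence $\widehat{g_i}(S, \pi \bw) = \widehat{g_i}(\pi^{-1}(S), \bw)$ for every $S$; the same identity holds for $\widehat{s_i}$. Because $\cw$ is permutation invariant, $\E_\bw[\widehat{g_i}(S,\bw)^2]$ and $\E_\bw[\widehat{s_i}(S,\bw)^2]$ depend on $S$ only through $\abs{S}$. Combined with Parseval,
\[
\sum_{S} \widehat{g_i}(S,\bw)^2 = \E_\x[g_i(\x,\bw)^2] \le c^2, \qquad \sum_S \widehat{s_i}(S,\bw)^2 \le 1,
\]
this yields $\E_\bw[\widehat{g_i}(S,\bw)^2] \le c^2/\binom{n}{m}$ and $\E_\bw[\widehat{s_i}(S,\bw)^2] \le 1/\binom{n}{m}$ for every $S$ of size $m$.

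Specializing $S = I$ with $\abs{S} = k$ immediately gives the $\bu$-bound $\sum_i \E_\bw[(\partial L/\partial u_i)^2] \le q c^2 / \binom{n}{k}$. For the $W$-bound, I split $\sum_j \widehat{s_i}(I \triangle \{j\})^2$ according to whether $j \in I$ (yielding $k$ terms at level $k-1$) or $j \notin I$ (yielding $n-k$ terms at level $k+1$), and use the identities $k/\binom{n}{k-1} = (n-k+1)/\binom{n}{k}$ and $(n-k)/\binom{n}{k+1} = (k+1)/\binom{n}{k}$ to collapse the per-neuron contribution to at most $u_i^2 (n+2)/\binom{n}{k}$. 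Summing over $i$ with $\abs{u_i} \le 1$ and invoking Pascal's rule $\binom{n}{k} \ge \max\{\binom{n-1}{k}, \binom{n-1}{k-1}\}$ then gives the stated min-form, with the $n+2$ absorbed into the $n$ factor.

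The main subtlety is the very first step. Without the constraint $\abs{h} \le 1$, the hinge-loss gradient would carry the subdifferential factor $\ind\{f(\x) h(\x) < 1\}$, which itself depends on the non-permutation-symmetric target $f = \chi_I$ and would destroy the symmetrization of the Fourier coefficients that the argument hinges on. The boundedness initialization is therefore doing the real work; once it is in place, Parseval plus the symmetry argument finishes the proof cleanly.
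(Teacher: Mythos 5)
Your proposal is correct and rests on the same two pillars as the paper's proof: the boundedness of $h$ keeps the hinge loss in its linear regime, so each partial derivative is exactly one Fourier coefficient of a bounded function of the weights, and permutation invariance of $\cw$ combined with Parseval forces that coefficient to be small in expectation. The difference is in how the symmetrization is bookkept. The paper keeps the factor $x_j$ inside the test function $g_j(\x)=x_j u_i \sigma'(\inner{\bw^{(i)},\x}+b_i)$ and sums $\inner{g_j, \chi_{\pi(I)}}_\cd^2$ over a family $S_j$ of permutations that fix $j$ and send $I$ to distinct images, of size $\binom{n-1}{k}$ or $\binom{n-1}{k-1}$ according to whether $j\notin I$ or $j\in I$; that device is what produces the $\min$-form in the statement. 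You instead absorb $x_j$ into the Fourier index $I\triangle\{j\}$, note that $\E_{\bw}[\widehat{s_i}(S,\bw)^2]$ depends on $S$ only through $\abs{S}$, and divide Parseval's identity by the number of sets at each level. Your route is arguably cleaner --- it needs no ``permutations fixing $j$'' construction --- and your per-neuron bound $(n+2)/\binom{n}{k}$ matches what the paper's own summation over $j$ actually yields, namely $2n/\binom{n}{k}$, up to a constant. The one small inaccuracy is the closing claim that the $n+2$ can be ``absorbed into the $n$ factor'' to recover the exact stated $\min$-form: that step fails for $k=1$ (where $(n+2)/\binom{n}{1} > n/\binom{n-1}{1}$), though it holds for all $k\ge 2$ and is immaterial in the regime $k=\Theta(\log n)$ where the theorem is applied. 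You are also right that the boundedness initialization is what removes the indicator $\ind\{f(\x)h(\x)<1\}$ from the hinge subgradient; the paper uses this silently in its first displayed computation of the gradient.
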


From the above result, if $k = \Omega(\log n)$ then the norm of initial gradient is $qn^{-\Omega(\log n)}$. Therefore, unless $q = n^{\Omega(\log n)}$, GD will be ``stuck'' upon initialization, and will fail on the $k$-pattern problem.

The key for proving Theorem \ref{thm:fc_hardness} is the following observation: since the first layer of the FCN is initialized from a symmetric distribution, we observe that if learning \textit{some} function that relies on $k$ bits of the input is hard, then learning \textit{any} function that relies on $k$ bits is hard.
Using Fourier analysis (e.g., \cite{blum1994weakly, kearns1998efficient, shalev2017failures}), we can show that learning $k$-parities (functions of the form $\x \mapsto \prod_{i \in I} x_i$) using gradient-descent is hard. Since an arbitrary $k$-parity is hard, then any $k$-parity, and specifically a parity of $k$ consecutive bits, is also hard. That is, since the first layer is initialized symmetrically, training a FCN on the original input is equivalent to training a FCN on an input where all the input bits are randomly permuted. So, for a FCN, learning a function that depends on consecutive bits is just as hard as learning a function that depends on arbitrary bits (a task that is known to be hard).

\begin{proof} of Theorem \ref{thm:fc_hardness}.
Denote $\chi_{I'} = \prod_{i \in I'} x_i$, so $f(\x) = \chi_I$ with $I = \{1, \dots, k\}$.
We begin by calculating the gradient w.r.p. to $\bw^{(i)}_j$:
\[
\frac{\partial}{\partial \bw_j^{(i)}} L_{f, \cd}(h_{\bu, \bw, \bb})
= \Mean{\cd}{\frac{\partial}{\partial \bw_j^{(i)}} \ell(h_{\bu, \bw, \bb}(\x), f(\x))}
= -\Mean{\cd}{x_j u_i\sigma'\left(\inner{\bw^{(i)}, \x}+b_i\right) \chi_I(\x)}
\]
Fix some permutation $\pi : [n] \to [n]$. For some vector $\x \in \reals^n$ we denote $\pi(\x) = (x_{\pi(1)}, \dots, x_{\pi(n)})$, for some subset $I \subseteq [n]$ we denote $\pi(I) = \cup_{j \in I}\{\pi(j)\}$. Notice that we have for all $\x, \z \in \reals^n$:
$\chi_I(\pi(\x)) = \chi_{\pi(I)}$ and $\inner{\pi(\x),\z} = \inner{\x, \pi^{-1}(\z)}$. 
Denote $\pi(h_{\bu,\bw,\bb})(\x) = \sum_{i=1}^k u_i\sigma(\inner{\pi(\bw^{(i)}), \x} + b_i)$.
Denote $\pi(\cd)$ the distribution of $\pi(\x)$ where $\x \sim \cd$. Notice that since $\cd$ is the uniform distribution, we have $\pi(\cd) = \cd$. From all the above, for every permutation $\pi$ with $\pi(j) = j$ we have:
\begin{align*}
&-\frac{\partial}{\partial \bw_j^{(i)}} L_{\chi_{\pi(I)},\cd}(h_{\bu,\bw,\bb}) 
= \Mean{\x \sim \cd}{x_{j} u_i\sigma'\left(\inner{\bw^{(i)}, \x} + b_i\right) \chi_{\pi(I)}(\x)} \\
&= \Mean{\x \sim \pi(\cd)}{x_j u_i\sigma'\left(\inner{\bw^{(i)}, \pi^{-1}(\x)} + b_i\right) \chi_I(\x)} \\
&= \Mean{\x \sim \cd}{x_j u_i\sigma'\left(\inner{\pi(\bw^{(i)}), \x} + b_i\right) \chi_I(\x)}
= -\frac{\partial}{\partial \bw_j^{(i)}} L_{\chi_{I},\cd}(\pi(h_{\bu,\bw,\bb}))
\end{align*}

Fix some $I \subseteq [n]$ with $\abs{I} = k$ and $j \in [n]$. Now, let $S_j$ be a set of permutations satisfying:
\begin{enumerate}
\item For all $\pi_1, \pi_2 \in S_j$ with $\pi_1 \ne \pi_2$ we have $\pi_1(I) \ne \pi_2(I)$.
\item For all $\pi \in S_j$ we have $\pi(j) = j$.
\end{enumerate}
Note that if $j \notin I$ then the maximal size of such $S_j$ is $\binom{n-1}{k}$, and if $j \in I$ then the maximal size is $\binom{n-1}{k-1}$.
Denote $g_j(\x) = x_j u_i \sigma'(\inner{\bw^{(i)}, \x} + b_i)$. We denote the inner-product $\inner{\psi,\phi}_\cd = \E_{\x \sim \cd} \left[\psi(\x) \phi(\x)\right]$ and the induced norm $\norm{\psi}_\cd = \sqrt{\inner{\psi,\psi}_\cd}$. Since $\{\chi_{I'}\}_{I' \subseteq [n]}$ is an orthonormal basis w.r.p. to $\inner{\cdot, \cdot}_\cd$ from Parseval's equality we have:
\begin{align*}
\sum_{\pi \in S_j} \left(\frac{\partial}{\partial \bw_j^{(i)}} L_{\chi_I,\cd}(\pi(h_{\bu,\bw,\bb})) \right)^2 &= \sum_{\pi \in S} \left(\frac{\partial}{\partial \bw_j^{(i)}} L_{\chi_{\pi(I)},\cd}(h_{\bu,\bw,\bb}) \right)^2 \\
&= \sum_{\pi \in S} \inner{g_j, \chi_{\pi(I)}}_\cd^2 \le \sum_{I' \subseteq [n]}\inner{g_j, \chi_{I'}}_\cd^2 = \norm{g_j}^2_\cd \le 1
\end{align*}
So, from the above we get that, taking $S_j$ of maximal size:
\[
\E_{\pi \sim S_j}{ \left(\frac{\partial}{\partial \bw_j^{(i)}} L_{\chi_I,\cd}(\pi(h_{\bu,\bw,\bb})) \right)^2} \le \abs{S_j}^{-1} \le \min \left\lbrace \binom{n-1}{k}^{-1}, \binom{n-1}{k-1}^{-1}\right\rbrace
\]
Now, for some permutation invariant distribution of weights $\cw$ we have:
\[
\E_{\bw \sim \cw} \left(\frac{\partial}{\partial \bw_j^{(i)}} L_{\chi_I,\cd}(h_{\bu,\bw,\bb}) \right)^2 =
\E_{\bw \sim \cw} \E_{\pi \sim S_j}{ \left(\frac{\partial}{\partial \bw_j^{(i)}} L_{\chi_I,\cd}(\pi(h_{\bu,\bw,\bb})) \right)^2} \le \abs{S_j}^{-1}
\]
Summing over all neurons we get:
\[
\E_{\bw \sim \cw}{\norm{\frac{\partial}{\partial W} L_{\chi_I,\cd}(h_{\bu,\bw,\bb})}^2_2 } \le qn \cdot \min \left\lbrace \binom{n-1}{k}^{-1}, \binom{n-1}{k-1}^{-1}\right\rbrace
\]
We can use a similar argument to bound the gradient of $\bu$. We leave the details to the appendix.
\end{proof}

\section{Neural Architecture Search}
So far, we showed that while the $(\log n)$-pattern problem can be solved efficiently using a CNN, this problem is hard for a FCN to solve. Since the CNN architecture is designed for processing consecutive patterns of the inputs, it can easily find the pattern that determines the label. The FCN, however, disregards the order of the input bits, and so it cannot enjoy from the fact that the bits which determine the label are consecutive. In other words, the FCN architecture needs to learn the order of the bits, while the CNN already encodes this order in the architecture.

So, a FCN fails to recover the $k$-pattern since it does not assume anything about the order of the input bits. But, is it be possible to recover the order of the bits prior to training the network?
Can we apply some algorithm that searches for an optimal architecture to solve the $k$-pattern problem? Such motivation stands behind the thriving research field of Neural Architecture Search algorithms (see \cite{elsken2018neural} for a survey).

Unfortunately, we claim that if the order of the bits is not known to the learner, \textit{no architecture search algorithm can help in solving the $k$-pattern problem}. To see this, it is enough to observe that when the order of the bits is unknown, the $k$-pattern problem is equivalent to the $k$-Junta problem: learning a function that depends on an arbitrary (not necessarily consecutive) set of $k$ bits from the input. Learning $k$-Juntas is a well-studied problem in the literature of learning theory (e.g., \cite{mossel2003learning}). The best algorithm for solving the $(\log n)$-Junta problem runs in time $n^{O(\log n)}$, and no poly-time algorithm is known for solving this problem. Moreover, if we consider statistical-query algorithms (a wide family of algorithms, that only have access to estimations of query function on the distribution, e.g. \cite{blum2003noise}), then existing lower bounds show that the $(\log n)$-Junta problem \textbf{cannot} be solved in polynomial time \citep{blum1994weakly}.


\section{Experiments}
\begin{figure}[t]
      \centering
      \begin{minipage}{0.3\linewidth}
\begin{tikzpicture}

\definecolor{color1}{rgb}{1,0.498039215686275,0.0549019607843137}
\definecolor{color0}{rgb}{0.12156862745098,0.466666666666667,0.705882352941177}
\definecolor{color2}{rgb}{0.172549019607843,0.627450980392157,0.172549019607843}

\begin{axis}[
legend cell align={left},
legend entries={{FCN},{CNN},{LCN}},
legend style={at={(0.97,0.03)}, anchor=south east, draw=white!80.0!black, font=\tiny},
tick align=outside,
tick pos=left,
title={n=5},
x grid style={lightgray!92.02614379084967!black},
xlabel={epoch},
xmajorgrids,
xmin=-1.45, xmax=30.45,
y grid style={lightgray!92.02614379084967!black},
ylabel={accuracy},
ymajorgrids,
ymin=0.489955, ymax=0.814345,
width=1\linewidth,
height=0.85\linewidth
]
\addlegendimage{no markers, thick, color0}
\addlegendimage{no markers, thick, color1}
\addlegendimage{no markers, thick, color2}
\addplot [thick, color0]
table [row sep=\\]{%
0	0.5047 \\
1	0.5328 \\
2	0.5772 \\
3	0.6465 \\
4	0.6635 \\
5	0.6716 \\
6	0.6774 \\
7	0.692 \\
8	0.6842 \\
9	0.6934 \\
10	0.6929 \\
11	0.7023 \\
12	0.7077 \\
13	0.71 \\
14	0.7084 \\
15	0.7138 \\
16	0.7173 \\
17	0.7198 \\
18	0.7054 \\
19	0.7067 \\
20	0.7151 \\
21	0.7102 \\
22	0.7137 \\
23	0.7282 \\
24	0.73 \\
25	0.7317 \\
26	0.7316 \\
27	0.7308 \\
28	0.7226 \\
29	0.7351 \\
};
\addplot [thick, color1]
table [row sep=\\]{%
0	0.5048 \\
1	0.5707 \\
2	0.6032 \\
3	0.6477 \\
4	0.6524 \\
5	0.6835 \\
6	0.6829 \\
7	0.7017 \\
8	0.7005 \\
9	0.7153 \\
10	0.7124 \\
11	0.71 \\
12	0.7046 \\
13	0.7183 \\
14	0.7342 \\
15	0.7371 \\
16	0.7391 \\
17	0.7274 \\
18	0.739 \\
19	0.7381 \\
20	0.7391 \\
21	0.7438 \\
22	0.7442 \\
23	0.7467 \\
24	0.7506 \\
25	0.7509 \\
26	0.7544 \\
27	0.7419 \\
28	0.7508 \\
29	0.7537 \\
};
\addplot [thick, color2]
table [row sep=\\]{%
0	0.5292 \\
1	0.6206 \\
2	0.675 \\
3	0.6867 \\
4	0.7007 \\
5	0.7126 \\
6	0.713 \\
7	0.7083 \\
8	0.7095 \\
9	0.7216 \\
10	0.7311 \\
11	0.7378 \\
12	0.7327 \\
13	0.7606 \\
14	0.7599 \\
15	0.758 \\
16	0.7643 \\
17	0.7713 \\
18	0.7668 \\
19	0.774 \\
20	0.774 \\
21	0.7851 \\
22	0.7803 \\
23	0.7876 \\
24	0.7881 \\
25	0.7918 \\
26	0.7925 \\
27	0.7879 \\
28	0.7996 \\
29	0.786 \\
};
\end{axis}

\end{tikzpicture}
      \end{minipage}
      \hspace{0.04\linewidth}
      \begin{minipage}{0.3\linewidth}
\begin{tikzpicture}

\definecolor{color1}{rgb}{1,0.498039215686275,0.0549019607843137}
\definecolor{color0}{rgb}{0.12156862745098,0.466666666666667,0.705882352941177}
\definecolor{color2}{rgb}{0.172549019607843,0.627450980392157,0.172549019607843}

\begin{axis}[
tick align=outside,
tick pos=left,
title={n=13},
x grid style={lightgray!92.02614379084967!black},
xlabel={epoch},
xmajorgrids,
xmin=-1.45, xmax=30.45,
y grid style={lightgray!92.02614379084967!black},
ymajorgrids,
ymin=0.47616, ymax=0.80924,
width=1\linewidth,
height=0.85\linewidth
]
\addplot [semithick, color0]
table [row sep=\\]{%
0	0.499 \\
1	0.5019 \\
2	0.4978 \\
3	0.509 \\
4	0.4976 \\
5	0.4956 \\
6	0.4913 \\
7	0.5002 \\
8	0.4987 \\
9	0.4972 \\
10	0.501 \\
11	0.5014 \\
12	0.4982 \\
13	0.5032 \\
14	0.5016 \\
15	0.5126 \\
16	0.5338 \\
17	0.5355 \\
18	0.5355 \\
19	0.5408 \\
20	0.5467 \\
21	0.58 \\
22	0.5903 \\
23	0.6064 \\
24	0.5988 \\
25	0.6075 \\
26	0.611 \\
27	0.6121 \\
28	0.6199 \\
29	0.6054 \\
};
\addplot [semithick, color1]
table [row sep=\\]{%
0	0.5004 \\
1	0.4947 \\
2	0.5257 \\
3	0.551 \\
4	0.5901 \\
5	0.6099 \\
6	0.6211 \\
7	0.6265 \\
8	0.6381 \\
9	0.6426 \\
10	0.6505 \\
11	0.6628 \\
12	0.6647 \\
13	0.6738 \\
14	0.6716 \\
15	0.6814 \\
16	0.6889 \\
17	0.6861 \\
18	0.6985 \\
19	0.6926 \\
20	0.6892 \\
21	0.6834 \\
22	0.6878 \\
23	0.6966 \\
24	0.6886 \\
25	0.6989 \\
26	0.681 \\
27	0.7035 \\
28	0.6903 \\
29	0.6992 \\
};
\addplot [semithick, color2]
table [row sep=\\]{%
0	0.5047 \\
1	0.6055 \\
2	0.6657 \\
3	0.6736 \\
4	0.6926 \\
5	0.6932 \\
6	0.7026 \\
7	0.7064 \\
8	0.6884 \\
9	0.7266 \\
10	0.718 \\
11	0.7389 \\
12	0.7471 \\
13	0.7489 \\
14	0.7563 \\
15	0.7627 \\
16	0.7707 \\
17	0.7684 \\
18	0.7671 \\
19	0.7735 \\
20	0.7713 \\
21	0.7685 \\
22	0.7869 \\
23	0.7858 \\
24	0.7941 \\
25	0.7818 \\
26	0.7818 \\
27	0.794 \\
28	0.7877 \\
29	0.7857 \\
};
\end{axis}

\end{tikzpicture}
      \end{minipage}
      \hspace{0.035\linewidth}
      \begin{minipage}{0.3\linewidth}
\begin{tikzpicture}

\definecolor{color1}{rgb}{1,0.498039215686275,0.0549019607843137}
\definecolor{color0}{rgb}{0.12156862745098,0.466666666666667,0.705882352941177}
\definecolor{color2}{rgb}{0.172549019607843,0.627450980392157,0.172549019607843}

\begin{axis}[
tick align=outside,
tick pos=left,
title={n=19},
x grid style={lightgray!92.02614379084967!black},
xlabel={epoch},
xmajorgrids,
xmin=-1.45, xmax=30.45,
y grid style={lightgray!92.02614379084967!black},
ymajorgrids,
ymin=0.476035, ymax=0.805265,
width=1\linewidth,
height=0.85\linewidth
]
\addplot [thick, color0]
table [row sep=\\]{%
0	0.4944 \\
1	0.5078 \\
2	0.4999 \\
3	0.496 \\
4	0.5048 \\
5	0.4986 \\
6	0.4977 \\
7	0.5015 \\
8	0.496 \\
9	0.5088 \\
10	0.5008 \\
11	0.4919 \\
12	0.492 \\
13	0.4938 \\
14	0.5002 \\
15	0.491 \\
16	0.4962 \\
17	0.5033 \\
18	0.5011 \\
19	0.5007 \\
20	0.5046 \\
21	0.5027 \\
22	0.4967 \\
23	0.5038 \\
24	0.4954 \\
25	0.497 \\
26	0.502 \\
27	0.5037 \\
28	0.495 \\
29	0.4988 \\
};
\addplot [thick, color1]
table [row sep=\\]{%
0	0.5008 \\
1	0.4959 \\
2	0.52 \\
3	0.5348 \\
4	0.5487 \\
5	0.5653 \\
6	0.593 \\
7	0.6116 \\
8	0.6126 \\
9	0.6327 \\
10	0.6209 \\
11	0.6418 \\
12	0.6431 \\
13	0.6653 \\
14	0.6702 \\
15	0.6539 \\
16	0.671 \\
17	0.6747 \\
18	0.6742 \\
19	0.6702 \\
20	0.6678 \\
21	0.6777 \\
22	0.6773 \\
23	0.6816 \\
24	0.6926 \\
25	0.691 \\
26	0.6859 \\
27	0.6852 \\
28	0.6887 \\
29	0.6949 \\
};
\addplot [thick, color2]
table [row sep=\\]{%
0	0.5024 \\
1	0.6091 \\
2	0.6707 \\
3	0.6854 \\
4	0.7008 \\
5	0.7079 \\
6	0.7067 \\
7	0.7163 \\
8	0.7246 \\
9	0.7297 \\
10	0.7345 \\
11	0.7473 \\
12	0.7463 \\
13	0.7437 \\
14	0.7586 \\
15	0.7475 \\
16	0.7694 \\
17	0.7604 \\
18	0.7595 \\
19	0.776 \\
20	0.7651 \\
21	0.7782 \\
22	0.768 \\
23	0.774 \\
24	0.7678 \\
25	0.7829 \\
26	0.7805 \\
27	0.7883 \\
28	0.7792 \\
29	0.7903 \\
};
\end{axis}

\end{tikzpicture}
      \end{minipage} \\
      \includegraphics[scale=0.115,trim={1cm 0 1cm 0},clip]{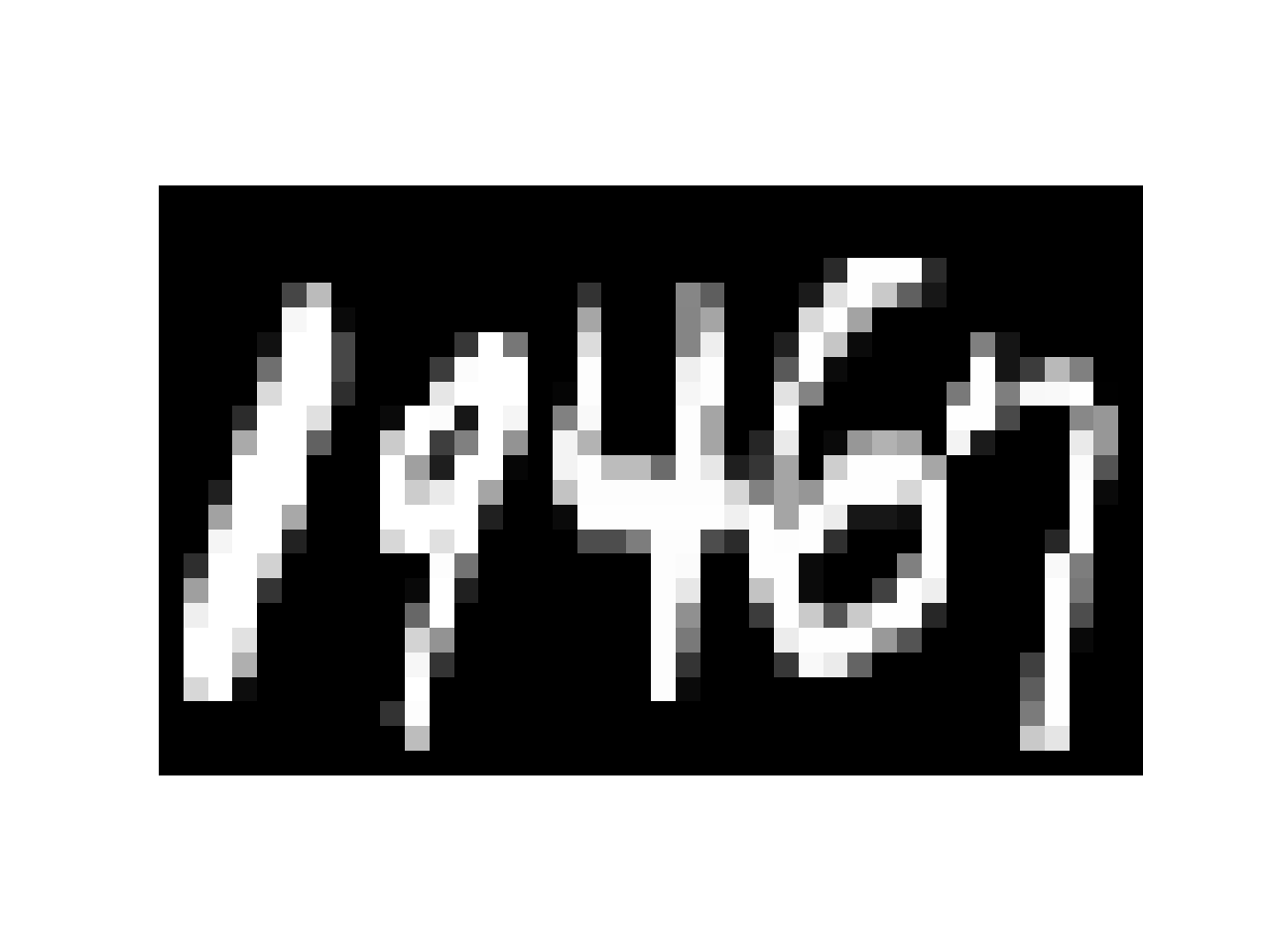}
      \includegraphics[scale=0.3,trim={1cm 3.7cm 1cm 3.7cm},clip]{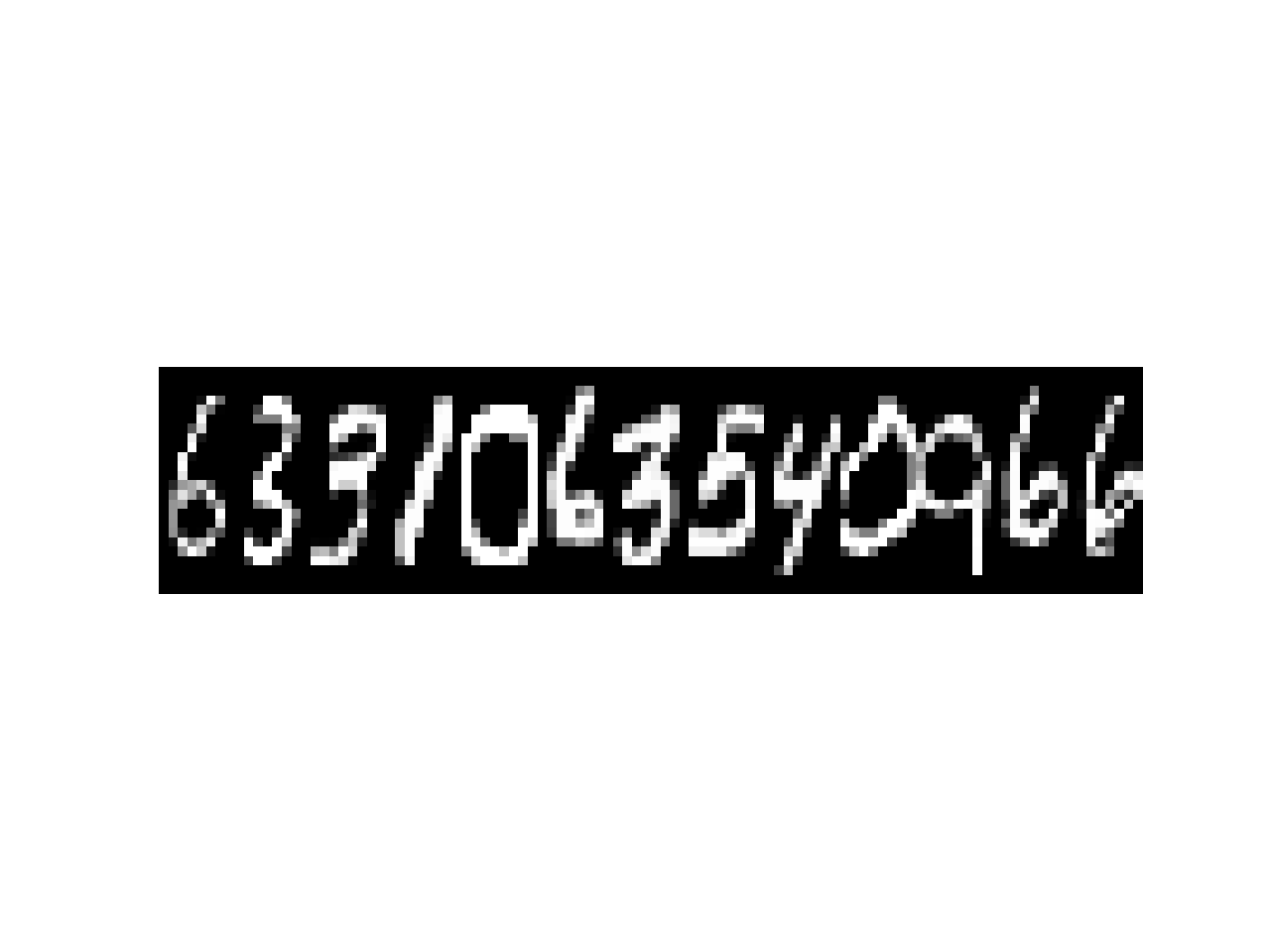}
      \includegraphics[scale=0.438,trim={1cm 4.4cm 1cm 4.4cm},clip]{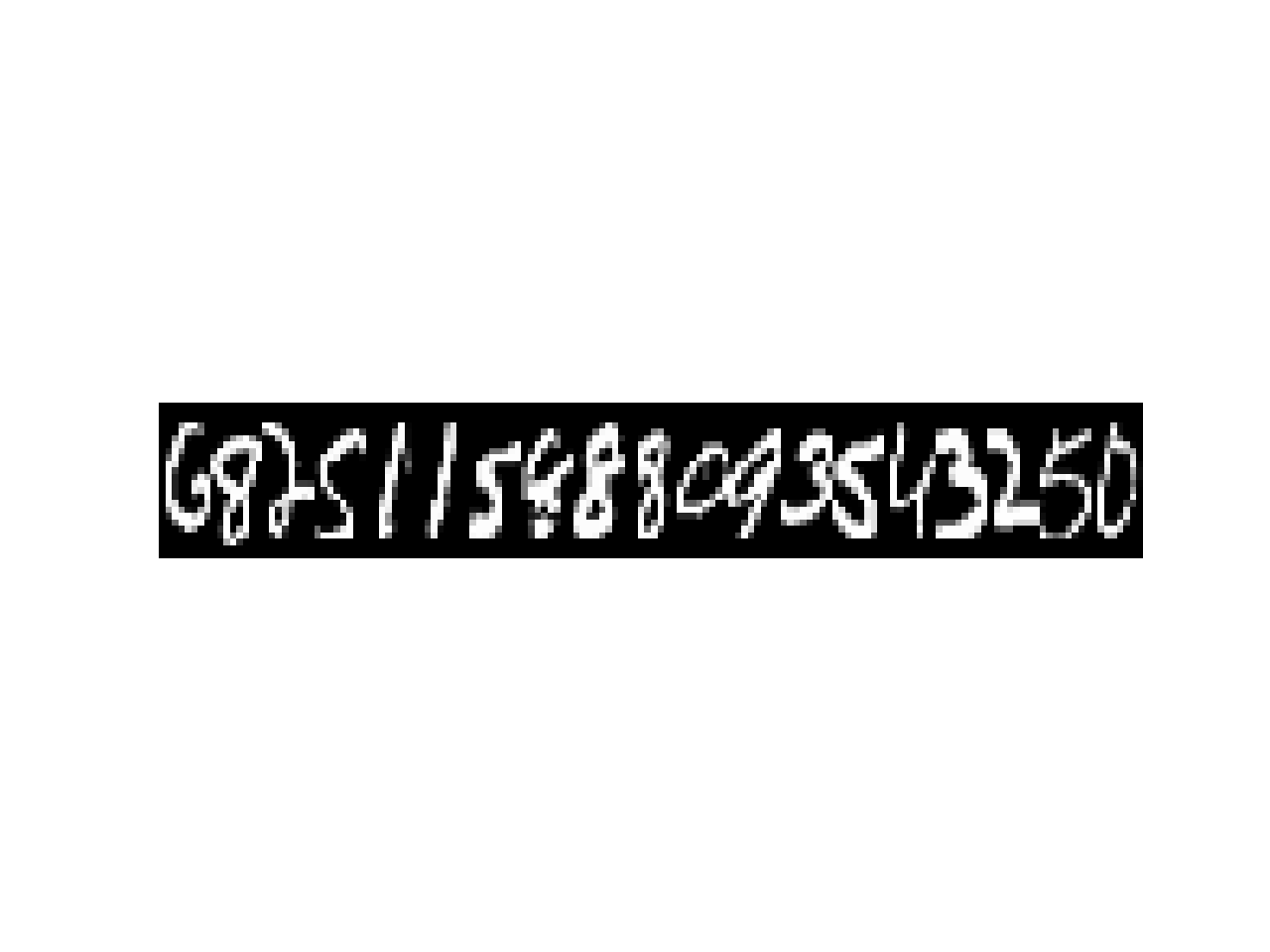}

	\caption{Top: Performance of different architectures on a size-$n$ MNIST sequences, where the label is determined by the parity of the central $3$ digits. Bottom: MNIST sequences of varying length.}\label{fig:mnist_experiment}

\end{figure}

In the previous sections we showed a simplistic learning problem that can be solved using CNNs and LCNs, but is hard to solve using FCNs. In this problem, the label is determined by a few \textit{consecutive} bits of the input.
In this section we show some experiments that validate our theoretical results. In these experiments, the input to the network is a sequence of $n$ MNIST digits,
where each digit is scaled and cropped to a size of $24 \times 8$.
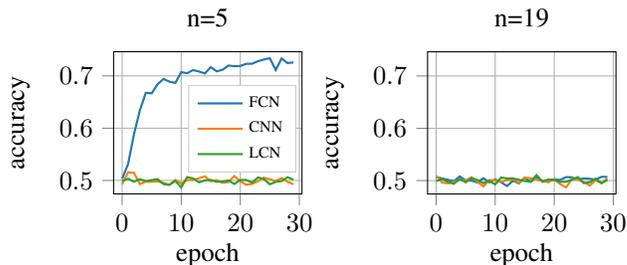
\begin{wrapfigure}{r}{0.65\textwidth}
      \begin{minipage}{0.45\linewidth}
\begin{tikzpicture}

\definecolor{color1}{rgb}{1,0.498039215686275,0.0549019607843137}
\definecolor{color0}{rgb}{0.12156862745098,0.466666666666667,0.705882352941177}
\definecolor{color2}{rgb}{0.172549019607843,0.627450980392157,0.172549019607843}

\begin{axis}[
legend cell align={left},
legend entries={{FCN},{CNN},{LCN}},
legend style={at={(0.97,0.15)}, anchor=south east, draw=white!80.0!black,font=\tiny},
tick align=outside,
tick pos=left,
title={n=5},
x grid style={lightgray!92.02614379084967!black},
xlabel={epoch},
xmajorgrids,
xmin=-1.45, xmax=30.45,
y grid style={lightgray!92.02614379084967!black},
ylabel={accuracy},
ymajorgrids,
ymin=0.47434, ymax=0.74626,
width=1\linewidth,
height=0.85\linewidth
]
\addlegendimage{no markers, thick, color0}
\addlegendimage{no markers, thick, color1}
\addlegendimage{no markers, thick, color2}
\addplot [thick, color0]
table [row sep=\\]{%
0	0.5037 \\
1	0.5312 \\
2	0.5884 \\
3	0.6341 \\
4	0.6675 \\
5	0.6666 \\
6	0.6841 \\
7	0.6942 \\
8	0.689 \\
9	0.6865 \\
10	0.7072 \\
11	0.7051 \\
12	0.7111 \\
13	0.7084 \\
14	0.7046 \\
15	0.7168 \\
16	0.7087 \\
17	0.7117 \\
18	0.7198 \\
19	0.7185 \\
20	0.719 \\
21	0.723 \\
22	0.7233 \\
23	0.7281 \\
24	0.7313 \\
25	0.7339 \\
26	0.7113 \\
27	0.7334 \\
28	0.7244 \\
29	0.7259 \\
};
\addplot [thick, color1]
table [row sep=\\]{%
0	0.493 \\
1	0.5159 \\
2	0.5145 \\
3	0.493 \\
4	0.4978 \\
5	0.4974 \\
6	0.4983 \\
7	0.4955 \\
8	0.4918 \\
9	0.5011 \\
10	0.4941 \\
11	0.5006 \\
12	0.5013 \\
13	0.5042 \\
14	0.5079 \\
15	0.4985 \\
16	0.5006 \\
17	0.4955 \\
18	0.4969 \\
19	0.5085 \\
20	0.4986 \\
21	0.4918 \\
22	0.4935 \\
23	0.4988 \\
24	0.5054 \\
25	0.5024 \\
26	0.4986 \\
27	0.5048 \\
28	0.4975 \\
29	0.4926 \\
};
\addplot [thick, color2]
table [row sep=\\]{%
0	0.4989 \\
1	0.5034 \\
2	0.4976 \\
3	0.5025 \\
4	0.4986 \\
5	0.5008 \\
6	0.5024 \\
7	0.493 \\
8	0.4916 \\
9	0.4991 \\
10	0.4867 \\
11	0.5064 \\
12	0.504 \\
13	0.4964 \\
14	0.5002 \\
15	0.501 \\
16	0.4974 \\
17	0.4978 \\
18	0.5 \\
19	0.4933 \\
20	0.501 \\
21	0.501 \\
22	0.4959 \\
23	0.5058 \\
24	0.5027 \\
25	0.4926 \\
26	0.4974 \\
27	0.4988 \\
28	0.5062 \\
29	0.5015 \\
};
\end{axis}

\end{tikzpicture}
      \end{minipage}
      \begin{minipage}{0.45\linewidth}
\begin{tikzpicture}

\definecolor{color1}{rgb}{1,0.498039215686275,0.0549019607843137}
\definecolor{color0}{rgb}{0.12156862745098,0.466666666666667,0.705882352941177}
\definecolor{color2}{rgb}{0.172549019607843,0.627450980392157,0.172549019607843}

\begin{axis}[
tick align=outside,
tick pos=left,
title={n=19},
x grid style={lightgray!92.02614379084967!black},
xlabel={epoch},
xmajorgrids,
xmin=-1.45, xmax=30.45,
y grid style={lightgray!92.02614379084967!black},
ylabel={accuracy},
ymajorgrids,
ymin=0.47434, ymax=0.74626,
width=1\linewidth,
height=0.85\linewidth
]
\addplot [thick, color0]
table [row sep=\\]{%
0	0.5067 \\
1	0.5041 \\
2	0.5014 \\
3	0.4988 \\
4	0.508 \\
5	0.5004 \\
6	0.5 \\
7	0.4962 \\
8	0.5046 \\
9	0.4964 \\
10	0.5022 \\
11	0.4968 \\
12	0.4895 \\
13	0.4998 \\
14	0.4946 \\
15	0.5056 \\
16	0.4975 \\
17	0.5076 \\
18	0.4988 \\
19	0.5008 \\
20	0.5024 \\
21	0.5012 \\
22	0.5066 \\
23	0.5062 \\
24	0.5018 \\
25	0.5042 \\
26	0.5034 \\
27	0.5018 \\
28	0.5072 \\
29	0.507 \\
};
\addplot [thick, color1]
table [row sep=\\]{%
0	0.5094 \\
1	0.4962 \\
2	0.4946 \\
3	0.4974 \\
4	0.5014 \\
5	0.5002 \\
6	0.5066 \\
7	0.4978 \\
8	0.4888 \\
9	0.5013 \\
10	0.5002 \\
11	0.4975 \\
12	0.5008 \\
13	0.5024 \\
14	0.4958 \\
15	0.5056 \\
16	0.5044 \\
17	0.5017 \\
18	0.4995 \\
19	0.4966 \\
20	0.5026 \\
21	0.4934 \\
22	0.4868 \\
23	0.5034 \\
24	0.502 \\
25	0.4972 \\
26	0.4904 \\
27	0.5006 \\
28	0.4956 \\
29	0.504 \\
};
\addplot [thick, color2]
table [row sep=\\]{%
0	0.4988 \\
1	0.503 \\
2	0.4973 \\
3	0.4938 \\
4	0.503 \\
5	0.4966 \\
6	0.5045 \\
7	0.5012 \\
8	0.4951 \\
9	0.4994 \\
10	0.4926 \\
11	0.5058 \\
12	0.5042 \\
13	0.5042 \\
14	0.5016 \\
15	0.499 \\
16	0.498 \\
17	0.5106 \\
18	0.4976 \\
19	0.5022 \\
20	0.5008 \\
21	0.4974 \\
22	0.4977 \\
23	0.5015 \\
24	0.5067 \\
25	0.495 \\
26	0.4968 \\
27	0.5036 \\
28	0.4948 \\
29	0.5014 \\
};
\end{axis}

\end{tikzpicture}
      \end{minipage}
\caption{$n$-sequence MNIST with non-consecutive parity.}\label{fig:mnist_not_consecutive}
\end{wrapfigure}
We then train three different network architectures: FCN, CNN and LCN. The CNN and LCN architectures have kernels of size $24 \times 24$, so that $3$ MNIST digits fit in a single kernel. In all the architectures we use a single hidden-layer with $1024$ neurons, and ReLU activation. The networks are trained with AdaDelta optimizer for 30 epochs \footnote{In each epoch we randomly shuffle the sequence of the digits.}.

In the first experiment, the label of the example is set to be the parity of the sum of the $3$ consecutive digits located in the middle of the sequence.
So, as in our theoretical analysis, the label is determined by a small area of consecutive bits of the input. Figure \ref{fig:mnist_experiment} shows the results of this experiment. As can be clearly seen, the CNN and LCN architectures achieve good performance regardless of the choice of $n$, where the performance of the FCN architectures critically degrades for larger $n$, achieving only chance-level performance when $n=19$. We also observe that LCN has a clear advantage over CNN in this task.
 As noted, our primary focus is on demonstrating the superiority of locality-based architectures, such as CNN and LCN, and we leave the comparison between the two to future work.

Our second experiment is very similar to the first, but instead of taking the label to be the parity of 3 consecutive digits, we calculate the label based on 3 digits that are far apart. Namely, we take the parity of the first, middle and last digits of the sequence. The results of this experiment are shown in Figure \ref{fig:mnist_not_consecutive}. As can be seen, for small $n$, FCN performs much better than CNN and LCN. This demonstrates that when we break the local structure, the advantage of CNN and LCN disappears, and using FCN becomes a better choice. However, for large $n$, all architectures perform poorly.

\paragraph{Acknowledgements:} This research is supported by the European Research Council (TheoryDL project). We thank Tomaso Poggio for raising the main question tackled in this paper and for valuable discussion and comments.

\newpage

\bibliography{bib}
\bibliographystyle{iclr2021_conference}

\newpage

\appendix
\section{Additional Proofs}
\begin{proof} of Lemma \ref{lem:norm_diff}.

We denote $h^{(t)} := h_{\bu^{(t)}, W^{(t)},b}$.
For every $j$ we have:
\[
\frac{\partial}{\partial \bu^{(j)}} L_{f,\cd}(h^{(t)})
= \Mean{\x \sim \cd}{\frac{\partial}{\partial \bu^{(j)}} \ell (h^{(t)}(\x), f(\x))}
= \Mean{\x \sim \cd}{ \sigma(W^{(t)} x_{j \dots j+k}) \ell' (h^{(t)}(\x), f(\x))}
\]
Therefore:
\[
\norm{\frac{\partial}{\partial \bu^{(j)}} L_{f,\cd}(h^{(t)})}
\le \Mean{\x \sim \cd}{ \norm{\sigma(W^{(t)} x_{j \dots j+k})}} \le c \sqrt{q}
\]
And from the updates of gradient-descent we have:
\[
\norm{\bu^{(j,t)}}
= \norm{\eta \sum_{t=1}^T \frac{\partial}{\partial \bu^{(j)}} L_{f,\cd}(h^{(t)})}
\le \eta \sum_{t=1}^T \norm{\frac{\partial}{\partial \bu^{(j)}} L_{f,\cd}(h^{(t)})} \le c \eta T \sqrt{q}
\]
Now, we have that:
\[
\frac{\partial}{\partial W} L_{f,\cd}(h^{(t)})
= \Mean{\x \sim \cd}{\sum_{j=1}^{n-k} \bu^{(j,t)} \x_{j \dots j+k}^\top \sigma'(W^{(t)} \x_{j \dots j+k} +b) \ell' (h^{(t)}(\x), f(\x))}
\]
And so: 
\[
\norm{\frac{\partial}{\partial W} L_{f,\cd}(h^{(t)})}
\le \Mean{\x \sim \cd}{\sum_{j=1}^{n-k} \norm{\bu^{(j,t)}} \norm{\x_{j \dots j+k}}} \le c (n-k) \eta T \sqrt{q} \sqrt{k}
\]
Again, by the updates of gradient-descent:
\[
\norm{W^{(T)} - W^{(0)}}
= \norm{\eta \sum_{t=1}^T \frac{\partial}{\partial W} L_{f,\cd}(h^{(t)})}
\le \eta \sum_{t=1}^T \norm{\frac{\partial}{\partial W} L_{f,\cd}(h^{(t)})} \le c \eta^2 T^2 n\sqrt{qk}
\]
\end{proof}

\begin{proof} of Lemma \ref{lem:loss_diff}
\begin{align*}
&\abs{L_{f,\cd}(h_{\bu^*,W^{(T)},b}) - L_{f,\cd}(h_{\bu^*,W^{(0)},b})} \\
&= \abs{\Mean{\x \sim \cd}{\ell(h_{\bu^*,W^{(T)},b}(\x),f(\x))} - \Mean{\x \sim \cd}{\ell(h_{\bu^*,W^{(0)},b}(\x),f(\x))}} \\
&\le \Mean{\x \sim \cd}{\abs{\ell(h_{\bu^*,W^{(T)},b}(\x),f(\x)) - \ell(h_{\bu^*,W^{(0)},b}(\x),f(\x))}} \\
&\le \Mean{\x \sim \cd}{\abs{h_{\bu^*,W^{(T)},b}(\x) - h_{\bu^*,W^{(0)},b}(\x)}} \\
&= \Mean{\x \sim \cd}{\abs{\sum_{j=1}^{n-k}\inner{\bu^{*(j)}, \sigma(W^{(T)} \x_{j \dots j+k} + \bb)- \sigma(W^{(0)} \x_{j \dots j+k} + \bb)}}} \\
&\le \Mean{\x \sim \cd}{\sum_{j=1}^{n-k}\norm{\bu^{*(j)}} \norm{ \sigma(W^{(T)} \x_{j \dots j+k} + \bb)- \sigma(W^{(0)} \x_{j \dots j+k} + \bb)}} \\
&\le \Mean{\x \sim \cd}{\sum_{j=1}^{n-k}\norm{\bu^{*(j)}} \norm{W^{(T)}-W^{(0)}} \norm{\x_{j \dots j+k}}}
\le c \eta^2 T^2 n k \sqrt{q}\sum_{j=1}^{n-k} \norm{\bu^{*(j)}}
\end{align*}
\end{proof}

\newpage

\begin{proof} (second part of Theorem \ref{thm:fc_hardness})

For some $\pi$, observe that:
\begin{align*}
\frac{\partial}{\partial u_i} L_{\chi_{\pi(I)},\cd}(h_{\bu,\bw,\bb})
&= - \Mean{\x \sim \cd}{\sigma\left(\inner{\bw^{(i)}, \x} + b_i\right) \chi_{I}(\pi(\x))} \\
&= -\Mean{\x \sim \cd}{\sigma\left(\inner{\pi(\bw^{(i)}), \x} + b_i\right) \chi_{I}(\x)}
= \frac{\partial}{\partial u_i} L_{\chi_{I},\cd}(\pi(h_{\bu,\bw,\bb}))
\end{align*}

Let $S$ be a maximal set of permutations such that for every $\pi_1 \ne \pi_2 \in S$ we have $\pi_1(I) \ne \pi_2(I)$, and note that $\abs{S} = \binom{n}{k}$. Let $g_i(\x) = \sigma\left(\inner{\bw^{(i)}, \x} + b_i\right)$ and note that $\norm{g_i}_\cd^2 \le c^2$. Therefore:
\begin{align*}
\sum_{\pi \in S_j} \left(\frac{\partial}{\partial u_i} L_{\chi_I,\cd}(\pi(h_{\bu,\bw,\bb})) \right)^2 
&= \sum_{\pi \in S} \left(\frac{\partial}{\partial u_i} L_{\chi_{\pi(I)},\cd}(h_{\bu,\bw,\bb}) \right)^2 \\
&= \sum_{\pi \in S} \inner{g_i, \chi_{\pi(I)}}_\cd^2 \le \sum_{I' \subseteq [n]}\inner{g_i, \chi_{I'}}_\cd^2 = \norm{g_i}^2_\cd \le c^2
\end{align*}
And therefore:
\[
\E_{\bw \sim \cw}{\norm{\frac{\partial}{\partial \bu} L_{\chi_I,\cd}(h_{\bu,\bw,\bb})}^2_2 }
= \E_{\bw \sim \cw} \E_{\pi \sim S}\norm{\frac{\partial}{\partial \bu} L_{\chi_I,\cd}(\pi(h_{\bu,\bw,\bb}))}^2_2
\le c^2 q \binom{n}{k}^{-1}
\]
\end{proof}

\end{document}